%% file: arxiv_main.tex
\documentclass[11pt]{article}
\input{preamble}

\title{Composite Online-to-Nonconvex Conversion\\
with Optimal Oracle Complexity}
\author{
  Mingyi Li\footnote{
    The University of Tokyo;
    \texttt{mingyi-mike@g.ecc.u-tokyo.ac.jp}.
  }
  \and
  Taira Tsuchiya\footnote{
    The University of Tokyo, The University of Osaka, and RIKEN;
    \texttt{tsuchiya@mist.i.u-tokyo.ac.jp}.
  }
  \and
  Kenji Yamanishi\footnote{
    The University of Tokyo;
    \texttt{yamanishi@g.ecc.u-tokyo.ac.jp}.
  }
}
\date{October 8, 2026}

\begin{document}
\maketitle

\input{section/abstract}

\input{section/01_introduction}
\input{section/02_related_work}
\input{section/03_preliminaries}
\input{section/04_proposed_algorithm}

\input{section/05_first_order}

\input{section/06_zeroth_order}
\input{section/07_smooth_setting}
\input{section/08_numerical_figures}
\input{section/08_numerical_experiments}

\input{section/ai_use_statement}

\section*{Acknowledgements}
TT is supported by JSPS KAKENHI Grant Number JP26K21297 and JST BOOST Grant Number JPMJBY25D4,
KY is partially supported by JSPS KAKENHI Grant Number JP24H00703.

\bibliographystyle{plainnat}
\bibliography{reference}

\clearpage
\appendix
\crefalias{section}{appendix}
\crefalias{subsection}{appendix}
\input{section/appendix}

\end{document}

%% file: preamble.tex
\pdftrailerid{}

\usepackage[top=30truemm,bottom=30truemm,left=25truemm,right=25truemm]{geometry}
\usepackage{parskip}
\usepackage[utf8]{inputenc}
\usepackage[T1]{fontenc}
\usepackage{tgtermes}
\usepackage{microtype}
\usepackage{amsmath}
\usepackage{amssymb}
\usepackage{bbm}
\usepackage{amsthm}
\usepackage{mathtools}

\usepackage[authoryear,round]{natbib}

\usepackage[ruled,vlined,linesnumbered]{algorithm2e}

\usepackage{xcolor}

\usepackage[hypertexnames=false,backref,pagebackref]{hyperref}
\hypersetup{
  pdfcreator={},
  pdfproducer={},
  colorlinks=true,
  linkcolor=blue,
  citecolor=blue,
}
\usepackage[nameinlink,capitalise,noabbrev]{cleveref}
\newcommand{\fixcrefthm}[1]{%
  \AddToHook{env/#1/begin}{\crefalias{theorem}{#1}}%
}
\fixcrefthm{theorem}
\fixcrefthm{lemma}
\fixcrefthm{proposition}
\fixcrefthm{corollary}
\fixcrefthm{assumption}
\fixcrefthm{definition}
\fixcrefthm{remark}

\crefname{equation}{}{}
\crefname{theorem}{Theorem}{Theorems}
\Crefname{theorem}{Theorem}{Theorems}
\crefname{lemma}{Lemma}{Lemmas}
\Crefname{lemma}{Lemma}{Lemmas}
\crefname{corollary}{Corollary}{Corollaries}
\Crefname{corollary}{Corollary}{Corollaries}
\crefname{definition}{Definition}{Definitions}
\Crefname{definition}{Definition}{Definitions}
\crefname{remark}{Remark}{Remarks}
\Crefname{remark}{Remark}{Remarks}
\crefname{assumption}{Assumption}{Assumptions}
\Crefname{assumption}{Assumption}{Assumptions}
\crefname{proposition}{Proposition}{Propositions}
\Crefname{proposition}{Proposition}{Propositions}
\Crefname{algocf}{Algorithm}{Algorithms}
\crefname{line}{Line}{Lines}
\Crefname{line}{Line}{Lines}
\renewcommand{\nllabel}[1]{\addtocounter{AlgoLine}{-1}\refstepcounter{AlgoLine}\label[line]{#1}}
\crefname{appendix}{Appendix}{Appendices}
\Crefname{appendix}{Appendix}{Appendices}

\SetArgSty{textnormal}
\SetKwInOut{KwRequire}{Require}

\usepackage{graphicx}
\graphicspath{{figure/}}
\usepackage{enumitem}
\usepackage{booktabs}
\usepackage{array}
\usepackage{pdflscape}

\usepackage{wrapfig}
\usepackage{tikz}

\theoremstyle{plain}
\newtheorem{theorem}{Theorem}[section]
\newtheorem{lemma}[theorem]{Lemma}
\newtheorem{proposition}[theorem]{Proposition}

\theoremstyle{definition}
\newtheorem{definition}[theorem]{Definition}
\newtheorem{assumption}[theorem]{Assumption}
\theoremstyle{remark}

\newcommand{\calC}{\mathcal{C}}

\newcommand{\calF}{\mathcal{F}}

\newcommand{\calU}{\mathcal{U}}

\renewcommand{\epsilon}{\varepsilon}

\def\E{\mathbb{E}}

\newcommand{\ind}{\mathbbm{1}}
\def\R{\mathbb{R}}

\DeclareMathOperator*{\argmin}{arg\,min}

\DeclarePairedDelimiter{\abs}{\lvert}{\rvert}
\DeclarePairedDelimiter{\inpr}{\langle}{\rangle}
\DeclarePairedDelimiter{\brk}{[}{]}
\DeclarePairedDelimiter{\prn}{(}{)}
\DeclarePairedDelimiter{\set}{\{}{\}}
\DeclarePairedDelimiter{\nrm}{\|}{\|}
\DeclarePairedDelimiter{\ceil}{\lceil}{\rceil}
\newcommand{\unif}{\mathsf{Unif}}
\renewcommand{\hat}{\widehat}
\newcommand{\dd}{\mathrm{d}}
\newcommand{\CompositeONC}{\textsc{Composite O2NC}}

\allowdisplaybreaks[1]
\usepackage{autonum}

\SetKwInput{Require}{Require}
\SetKwInput{Return}{Return}

%% file: section/abstract.tex
\begin{abstract}
We consider stochastic nonsmooth nonconvex composite optimization, which includes several important problems such as constrained optimization and the regularized training of neural networks.  The objective is the sum of a possibly nonsmooth nonconvex Lipschitz function and a convex regularizer, and the function is accessed through stochastic gradients or function values. The goal is to find a point that satisfies a Goldstein-type stationarity condition designed for composite objectives. To our knowledge, no oracle complexity bound for this setting is known under first-order access, and existing complexities under zeroth-order access are suboptimal. To handle this issue, we employ the framework of online-to-nonconvex conversion, which chooses update directions by an online learner and is known to achieve optimal rates for noncomposite problems. We extend the framework to our composite scenario by introducing new losses for the learner, which contain the regularizer itself rather than its linearization and for which a variant of online mirror descent achieves low regret. We show that the resulting algorithm finds such a point with $O(\delta^{-1}\varepsilon^{-3})$ stochastic gradient queries or $O(d\delta^{-1}\varepsilon^{-3})$ function-value queries, where $\delta$ is the Goldstein radius, $\varepsilon$ is the stationarity tolerance, and $d$ is the dimension. These rates match the optimal ones for noncomposite nonsmooth nonconvex optimization, demonstrating that the additional convex regularizer does not worsen the oracle complexity. We also give rates for the smooth case and present numerical experiments.
\end{abstract}

%% file: section/01_introduction.tex
\section{Introduction}\label{sec:intro}
\begin{table*}[t]
  \centering
  \small
  \caption{Oracle complexity for finding a $(\gamma,\delta,\epsilon)$-PGSP (see \cref{def:pgsp}), which reduces to $(\delta,\epsilon)$-Goldstein stationarity when $h=0$.
  The bounds show the dependence on $d$, $\delta$, and $\epsilon$ for $\delta,\epsilon\in(0,1]$.}
  \label{tab:nonsmooth-comparison}
  \begin{tabular}{llll}
    \toprule
    Reference & Problem class & Oracle & Complexity \\
    \midrule
    \citet{cutkosky2023}
      & $h=0$ & First-order & $O(\delta^{-1}\epsilon^{-3})$ \\
    \citet{kornowski2024zo}
      & $h=0$ & Zeroth-order & $O(d\delta^{-1}\epsilon^{-3})$ \\
    \citet[Theorem~2]{chen2025}
      & Composite & Zeroth-order (minibatch) & $O(d^{3/2}\delta^{-1}\epsilon^{-4})$ \\
    \citet[Theorem~3]{chen2025}
      & Composite & Zeroth-order (variance reduction) & $O(d^{3/2}\delta^{-1}\epsilon^{-3})$ \\
    \citet{pougkakiotis2026inexact}
      & Composite & Zeroth-order & $O(d^{3/2}\delta^{-1}\epsilon^{-4})$ \\
    \midrule
    \textbf{This work} (\cref{thm:first-order-complexity})
      & Composite & First-order & $O(\delta^{-1}\epsilon^{-3})$ \\
    \textbf{This work} (\cref{thm:zeroth-order-complexity})
      & Composite & Zeroth-order & $O(d\delta^{-1}\epsilon^{-3})$ \\
    \bottomrule
  \end{tabular}
\end{table*}

We consider stochastic nonsmooth nonconvex composite optimization,
\begin{align}
  \min_{x\in\R^d} \set*{ \Phi(x)\coloneqq f(x)+h(x) },
  \ 
  f(x)\coloneqq\E_\xi\brk*{F(x,\xi)},
  \label{eq:intro-problem}
\end{align}
where $\xi$ is a random vector, $f\colon\R^d\to\R$ is locally Lipschitz and may be nonconvex and nonsmooth, and $h\colon\R^d\to\R\cup\set{+\infty}$ is a proper closed convex function.
We study both first-order access to $f$ through stochastic gradients and zeroth-order access through stochastic function values.

Composite optimization includes important problem classes such as $\ell_1$-regularized optimization, nuclear-norm-regularized optimization, and constrained optimization~\citep{tibshirani1996lasso,mazumder2010spectral,parikh2014proximal}.
Most existing convergence analyses of composite optimization assume that $f$ is smooth, that is, $\nabla f$ is Lipschitz continuous~\citep{nesterov2013composite,ghadimi2016minibatch}.
In this smooth setting, stochastic first-order and zeroth-order methods find an $\epsilon$-stationary point of the objective $\Phi$ with $O(\epsilon^{-4})$ stochastic gradient queries and $O(d\epsilon^{-4})$ function-value queries, respectively~\citep{ghadimi2016minibatch}, where $\epsilon$ is the stationarity tolerance.
In many applications, however, $f$ is nonsmooth, as in support vector machines with the hinge loss~\citep{cortes1995support,shalev2011pegasos} and the regularized training of neural networks with ReLU activations~\citep{krizhevsky2012imagenet,wang2022hidden}.

With zeroth-order access, \citet{liu2024constrained} study nonsmooth nonconvex optimization over compact convex sets, and \citet{chen2025} extend this setting to composite objectives.
Their algorithms require $O(d^{3/2}\delta^{-1}\epsilon^{-4})$ function-value queries with minibatch estimation to find proximal Goldstein stationary points (PGSPs; see \cref{def:pgsp}), a Goldstein-type stationarity condition for composite objectives, where $\delta$ is the Goldstein radius and $d$ is the dimension.
Incorporating variance reduction further improves the bound to $O(d^{3/2}\delta^{-1}\epsilon^{-3})$.
To our knowledge, no stochastic first-order complexity bound for PGSPs is known for general Lipschitz nonconvex $f$ and proper closed convex $h$.

In the noncomposite setting where $h=0$, the seminal framework of online-to-nonconvex conversion (O2NC), which chooses update directions by online learning, allows us to obtain the optimal $O(\delta^{-1}\epsilon^{-3})$ stochastic gradient query complexity~\citep{cutkosky2023} and $O(d\delta^{-1}\epsilon^{-3})$ function-value query complexity~\citep{kornowski2024zo}.
Both rates are attained without variance reduction, and the zeroth-order bound is linear in $d$.
This raises the following question:
\begin{center}
  \emph{Can simple first-order and zeroth-order methods attain these oracle complexities for nonsmooth nonconvex composite optimization?}
\end{center}

\subsection{Contributions of this paper}\label{sec:contributions}
We answer this question affirmatively with a new conversion from online regret to proximal Goldstein stationarity.
The resulting method, \CompositeONC{}, finds a PGSP with $O(\delta^{-1}\epsilon^{-3})$ stochastic gradient queries or $O(d\delta^{-1}\epsilon^{-3})$ function-value queries.

Both versions attain these rates without variance reduction (\cref{thm:first-order-complexity,thm:zeroth-order-complexity}).
To our knowledge, our zeroth-order method is the first to find PGSPs for general nonsmooth nonconvex composite optimization with function-value query complexity linear in $d$, improving the $d^{3/2}$ dependence of \citet{chen2025}.
For $h=0$, the first-order rate matches the joint lower bound in $\delta$ and $\epsilon$~\citep{cutkosky2023}, while the zeroth-order rate is optimal in each of $d$, $\delta$, and $\epsilon$ individually~\citep{kornowski2024zo}.\footnote{\hypersetup{citecolor=black}%
\citet[Corollary~4.2]{zhang2026joint} prove a joint $\Omega(d\delta^{-1}\epsilon^{-3})$ lower bound in the setting where queries and the gap condition are restricted to a Euclidean ball, a setting different from ours.}
Since PGSP reduces to Goldstein stationarity when $h=0$, these lower bounds also apply to our composite problem class, and our rates are optimal in the same sense.
\Cref{tab:nonsmooth-comparison} summarizes the results.

Our method modifies the losses given to the online learner in O2NC.
In O2NC, the online learner selects update directions based on the linear losses $u\mapsto\inpr{g_t,u}$, where $g_t$ is an unbiased stochastic gradient of $f$ at a query point~\citep{cutkosky2023}.
This framework does not apply to $\Phi=f+h$ directly, because $\Phi$ need not be locally Lipschitz when $h$ takes the value $+\infty$.
Moreover, linearizing $h$ ignores the structure of $h$, which is known exactly.

To deal with this, we give the learner the losses $\ell_t(u)=\inpr{g_t,u}+h(x+u)-h(x)$, where $x\in\operatorname{dom}h$ is the current point.
The term $h(x+u)-h(x)$ lets the learner account for the regularizer or constraint when choosing a direction.
For these losses, composite objective mirror descent (COMID; \citealt{duchi2010}) attains the same regret bound as for linear losses, and our conversion turns this regret bound into a PGSP guarantee (\cref{lem:shifted-conversion}).
For zeroth-order access, we use two-point gradient estimates of a smoothed version of $f$ and transfer the resulting guarantee to $f+h$ using the smoothing result of \citet{kornowski2024zo}.

Our guarantee holds for every proximal parameter $\gamma>0$ simultaneously, and neither the algorithm nor its oracle complexity depends on $\gamma$.
Existing guarantees hold for particular values of $\gamma$, which is chosen as an update step size (\textit{e.g.,}~\citealt{ghadimi2016minibatch,reddi2016proximal,pham2020proxsarah,chen2025}).
In contrast, our algorithm returns $x+u^*$, where $u^*$ minimizes the averaged online loss over the feasible directions at the selected anchor $x$, and this point satisfies the PGSP guarantee for every $\gamma>0$ (\cref{lem:shifted-conversion}).
Thus $\gamma$ is used only to state the stationarity condition.

When $f$ is smooth, we show that \CompositeONC{} with an optimistic online learner finds an $\epsilon$-stationary point using $O(\epsilon^{-4})$ stochastic gradient queries or $O(d\epsilon^{-4})$ function-value queries (\cref{thm:smooth-complexity,thm:smooth-zo-complexity}).
These rates match existing bounds under the stochastic oracle, and with deterministic gradients our result recovers the $O(\epsilon^{-2})$ proximal-gradient rate~\citep{ghadimi2016minibatch} (\cref{tab:smooth-comparison} in \cref{app:smooth-comparison}).

Numerical experiments on nonsmooth composite problems show that \CompositeONC{} attains smaller values of both the stationarity measure and the objective than the zeroth-order methods of \citet{chen2025} for the same number of function-value queries (\cref{sec:experiments}).

%% file: section/02_related_work.tex
\section{Related work}\label{sec:related-work}

\paragraph{Composite optimization.}
Proximal-gradient methods are the standard approach to composite optimization~\citep{nesterov2013composite,parikh2014proximal}.
Accelerated methods include FISTA for convex objectives~\citep{beck2009fista} and extensions to nonconvex objectives~\citep{li2015accelerated,ghadimi2016accelerated,li2017momentum}.
For smooth nonconvex $f$, minibatch proximal-gradient methods attain $O(\epsilon^{-4})$ stochastic gradient query complexity~\citep{ghadimi2016minibatch}.
Other convergence analyses rely on the Kurdyka--{\L}ojasiewicz property~\citep{attouch2013descent,bolte2014palm}.

With additional smoothness assumptions on the component or sample gradients, variance-reduced methods further improve stochastic gradient query complexity.
Representative methods include ProxSVRG and its variants~\citep{xiao2014proxsvrg,nitanda2014accelerated,reddi2016proximal,li2018proxsvrg}, as well as ProxSAGA~\citep{reddi2016proximal}.
For stochastic expectation problems, ProxSpiderBoost~\citep{wang2019spiderboost}, ProxSARAH~\citep{pham2020proxsarah}, and hybrid SARAH-SGD methods~\citep{trandinh2022hybrid} attain $O(\epsilon^{-3})$ stochastic gradient query complexity under bounded variance and additional smoothness conditions on the sample gradients.

Extensions to nonsmooth $f$ have also been widely studied.
Under weak convexity, stochastic first-order~\citep{davis2019guided,davis2019model} and zeroth-order~\citep{pougkakiotis2023weakly} methods provide Moreau-envelope stationarity guarantees.
For functional constraints with Lipschitz objective and constraint functions, \citet{jia2025firstorder} establish Fritz--John and KKT guarantees under weak convexity, while \citet{grimmer2025goldstein} obtain Goldstein-type counterparts without weak convexity using deterministic first-order oracles.
The latter setting overlaps with ours for convex constraints, but we also handle composite objectives with convex regularizers using more general stochastic first-order oracles.

For smooth finite-sum composite problems, zeroth-order variants of ProxSVRG, ProxSAGA, and SPIDER replace gradients in the corresponding first-order methods with function-value estimates~\citep{huang2019gradientfree,kazemi2024zeroth}.
Beyond the weakly convex setting discussed above, \citet{liu2024constrained} study nonsmooth objectives over compact convex sets, and \citet{chen2025} extend this setting to composite objectives.
Their minibatch and variance-reduced methods find proximal Goldstein stationary points using $O(d^{3/2}\delta^{-1}\epsilon^{-4})$ and $O(d^{3/2}\delta^{-1}\epsilon^{-3})$ function-value queries, respectively.
\citet{pougkakiotis2026inexact} also study nonsmooth nonconvex composite objectives using inexact function-value queries. Their Moreau-envelope guarantee can be converted into a proximal Goldstein stationarity guarantee with $O(d^{3/2}\delta^{-1}\epsilon^{-4})$ function-value query complexity (see \cref{app:moreau-pgsp}).
For this proximal Goldstein criterion, our zeroth-order method improves all of these bounds to $O(d\delta^{-1}\epsilon^{-3})$, which is linear in $d$, without variance reduction.

\paragraph{Online-to-nonconvex conversion.}
The online-to-nonconvex conversion (O2NC) framework was introduced by \citet{cutkosky2023} for stochastic nonsmooth nonconvex optimization. It converts online regret bounds into Goldstein stationarity guarantees with optimal stochastic gradient query complexity.
\citet{kornowski2024zo} combine the O2NC conversion with randomized smoothing to obtain a bound linear in the dimension in the zeroth-order setting.

Double optimism further improves the guarantees when both the gradient and Hessian are Lipschitz~\citep{patitucci2026double}.
Random scaling and discounted regret connect O2NC to momentum~\citep{zhang2024random} and Adam with model exponential moving averages~\citep{ahn2024adamema}. The broader online-learning perspective also explains Adam through online learning of updates~\citep{ahn2024ftrl} and yields guarantees for schedule-free SGD~\citep{ahn2025schedulefree}.

For deterministic smooth objectives, \citet{cutkosky2023} recover the optimal gradient query complexity using optimistic online learners. Using momentum within discounted O2NC, \citet{li2026muon} also attain the optimal $O(\epsilon^{-2})$ gradient query complexity for deterministic smooth objectives without an optimistic online learner, using a regret decomposition based on momentum.
Other extensions handle heavy-tailed gradient noise~\citep{liu2024heavytails,liu2026ocoheavy} and remove auxiliary random interpolation or scaling under weak convexity~\citep{ji2026derandomized}. O2NC has also been applied to nonsmooth objectives on Riemannian manifolds~\citep{sahinoglu2025riemannian}, decentralized stochastic optimization~\citep{sahinoglu2024decentralized}, and linearly constrained bilevel optimization~\citep{kornowski2024bilevel}.

We extend O2NC using composite online losses that retain $h$ without linearization.
Regret bounds from composite objective mirror descent~\citep{duchi2010} then yield proximal Goldstein stationarity, even when $h$ takes the value $+\infty$.

%% file: section/03_preliminaries.tex
\section{Preliminaries}\label{sec:preliminaries}
This section states the assumptions on the objective and the stochastic oracle and introduces the Goldstein-type stationarity measure used in our analysis.

Throughout, we write $\nrm{\cdot}$ and $\inpr{\cdot,\cdot}$ for the
 Euclidean norm and inner product on $\R^d$,
$\mathbb{B}(x,r)\coloneqq\{y\in\R^d\colon\nrm{y-x}\leq r\}$ for the closed Euclidean ball,
$\mathbb{S}^{d-1}$ for the unit sphere, $\unif(S)$ for the uniform distribution on a set $S$,
and $[n]\coloneqq\set{1,\ldots,n}$ for a positive integer $n$.
For a convex function $h$, we write $\operatorname{dom}h$ and $\partial h$ for its effective domain and subdifferential.

\subsection{Problem setting}\label{sec:setting}

We consider the stochastic nonsmooth nonconvex composite optimization problem formulated in \cref{eq:intro-problem}.
We assume that $\Delta\coloneqq\Phi(x_1)-\inf_x\Phi(x)<\infty$,
where $x_1$ is the initial point.
We study two oracle models for $f$.
Under first-order access, a query at a point $y\in\R^d$ returns a stochastic gradient of $f$ at $y$.
Under zeroth-order access, a query at $y$ returns the function value $F(y,\xi)$ for a sample $\xi$, where the same sample can be used at two points as in the two-point feedback model~\citep{agarwal2010multipoint,ghadimi2013zo,duchi2015zo,shamir2017zo,kornowski2024zo}.
We impose the following assumptions on the objective and the first-order oracle, following
\citet{cutkosky2023}.

\begin{assumption}\label{ass:oracle}
The function $f$ is differentiable.  For some $\sigma\geq0$
and $G>0$, a query to the oracle at any point $y\in\R^d$ is independent of the preceding history and returns a stochastic gradient $g$ satisfying
(i)~$\E\brk{g\mid y}=\nabla f(y)$, (ii)~$\E\brk{\nrm{g-\nabla f(y)}^2\mid y}\leq\sigma^2$, and (iii)~$\E\brk{\nrm{g}^2\mid y}\leq G^2$.
\end{assumption}

Since $f$ is locally Lipschitz, its differentiability implies that $f$ is \emph{well-behaved}~\citep[Proposition~2]{cutkosky2023}, that is,
$f(y)-f(x)=\int_0^1\inpr{\nabla f\bigl(x+t(y-x)\bigr),\,y-x}\,\dd t$ for all $x,y\in\R^d$.

When $f$ is not differentiable, we replace it by
$f_\rho(x)\coloneqq\E_{z\sim\unif(\mathbb{B}(0,1))}\brk*{f(x+\rho z)}$ with a
smoothing radius $\rho>0$ and carry out the analysis on $f_\rho+h$.
Then $f_\rho$ is differentiable with
$\nabla f_\rho(x)=\E_{z\sim\unif(\mathbb{B}(0,1))}\brk*{\nabla f(x+\rho z)}$, where $f$ is differentiable at $x+\rho z$ almost surely, and a stochastic gradient of $f$ at $x+\rho z$ is a stochastic gradient of $f_\rho$ at $x$ with the same second-moment bound~\citep[Proposition~2]{cutkosky2023}.
Moreover, if $f$ is $L$-Lipschitz, it holds that
$\abs{f_\rho(x)-f(x)}\leq\rho L$ for every $x\in\R^d$.
Stationarity guarantees for $f_\rho+h$ are transferred to the objective $f+h$ by \cref{lem:smoothing-transfer}, with the Goldstein radius enlarged by $\rho$.

For zeroth-order access, we replace \cref{ass:oracle} with the following assumption on the sample functions, following \citet{kornowski2024zo}.

\begin{assumption}\label{ass:zo-estimator}
For any $\xi$, the function $F(\cdot,\xi)$ is
$L(\xi)$-Lipschitz, where $\E_\xi\brk*{L(\xi)^2}\leq L_0^2$.
\end{assumption}

\subsection{Goldstein-type stationarity}

Here, we first define the Goldstein subdifferential and then
use it to define proximal Goldstein stationary points for the composite
objective $f+h$, following \citet{chen2025}.  For a locally Lipschitz $f$,
the Clarke generalized directional derivative is
$f^\circ(x;v)\coloneqq\limsup_{y\to x,\,t\downarrow0}
\tfrac{f(y+tv)-f(y)}{t}$ and the Clarke subdifferential is
$\partial f(x)\coloneqq\set*{g\in\R^d \colon \inpr{g,v}\leq f^\circ(x;v)
 \, \text{for all} \, v\in\R^d}$~\citep{clarke1990}.
Then, for $\delta\geq0$, define the Goldstein $\delta$-subdifferential~\citep{goldstein1977} by
\begin{align}
  \partial_\delta f(x)
  \coloneqq\operatorname{conv}\set[\Bigg]{\bigcup_{y\in\mathbb{B}(x,\delta)}\partial f(y)}.
  \label{eq:goldstein-subdifferential}
\end{align}
We next define the proximal operator and the proximal-gradient mapping~\citep{nesterov2013composite,parikh2014proximal,ghadimi2016minibatch}.
For $\gamma>0$, $z\in\R^d$, $x\in\operatorname{dom}h$, and $g\in\R^d$, the
proximal operator of $h$, $\operatorname{prox}_{\gamma h}(z)$, is defined as
\begin{equation}
  \operatorname{prox}_{\gamma h}(z)
  \coloneqq
  \arg\min_{y\in\R^d}
  \set*{h(y)+\frac{1}{2\gamma}\nrm{y-z}^2},
  \label{eq:prox-definition}
\end{equation}
and the proximal-gradient mapping $G_{\gamma h}(x,g)$ as
\begin{equation}
  G_{\gamma h}(x,g)
  \coloneqq
  \frac{1}{\gamma}
  \prn*{x-\operatorname{prox}_{\gamma h}(x-\gamma g)}.
  \label{eq:prox-gradient-map}
\end{equation}
Note that the minimizer in \cref{eq:prox-definition} is unique since $h(y)+\frac{1}{2\gamma}\nrm{y-z}^2$ is strongly convex.  

We use the following extension of Goldstein stationarity to composite objectives $f+h$~\citep{chen2025}.
\begin{definition}[{PGSP, \citealt{chen2025}}]\label{def:pgsp}
A point $x$ is said to be a $(\gamma,\delta,\epsilon)$-proximal Goldstein stationary
point (PGSP) if
\begin{equation}
  \min_{g\in\partial_\delta f(x)}\nrm{G_{\gamma h}(x,g)}\leq\epsilon.
  \label{eq:PSGP}
\end{equation}
\end{definition}
This relates to other stationarity conditions as follows:
\begin{enumerate}[label=\textup{(\roman*)},labelindent=2pt,itemsep=0pt,topsep=0pt,parsep=0pt]
\item 
In (noncomposite) nonsmooth nonconvex optimization, $h=0$ gives $G_{\gamma h}(x,g)=g$.
Thus, \cref{eq:PSGP} is the $(\delta,\epsilon)$-Goldstein stationarity condition $\operatorname{dist}\prn*{0,\partial_\delta f(x)}\leq\epsilon$, independently of $\gamma$~\citep{zhang2020goldstein}.

\item 
In smooth nonconvex optimization, where $\nabla f$ is $L_1$-Lipschitz continuous, we have $\partial f(x)=\set*{\nabla f(x)}$.
In this case, $\nrm{G_{\gamma h}(x,\nabla f(x))}$ for a fixed $\gamma>0$ is the standard stationarity measure~\citep{nesterov2013composite,ghadimi2016minibatch}, and we call $x\in\operatorname{dom}h$ an $\epsilon$-stationary point of $\Phi$ if $\nrm{G_{\gamma h}(x,\nabla f(x))}\leq\epsilon$.
As shown by \citet[Proposition~2]{chen2025}, a $(\gamma,\epsilon/(2L_1),\epsilon/2)$-PGSP is an $\epsilon$-stationary point of $\Phi$, and 
conversely, an $\epsilon$-stationary point of $\Phi$ is a
$(\gamma,\delta,\epsilon)$-PGSP for every $\delta\geq0$.

\item 
In constrained nonsmooth nonconvex optimization, where $h$ in \cref{eq:intro-problem} is zero on $\Omega$ and $+\infty$ outside it,
the operator $\operatorname{prox}_{\gamma h}$ becomes the Euclidean projection onto
$\Omega$, and \cref{def:pgsp} coincides with the
$(\gamma,\delta,\epsilon)$-generalized Goldstein stationarity condition
of \citet[Definition~4.2]{liu2024constrained}.
\item
In weakly convex optimization, stationarity is commonly measured by the gradient norm of the Moreau envelope, which also relates to PGSP.
For the smoothed composite objective $\Phi_\delta=f_\delta+h$, let $L_\delta$ be a Lipschitz constant of $\nabla f_\delta$ and choose $0<\gamma<1/L_\delta$.
A point $x$ satisfying $\nrm{\nabla e_\gamma\Phi_\delta(x)}\leq\epsilon$ is a $(\gamma,\delta,(1+\gamma L_\delta)\epsilon)$-PGSP, where $e_\gamma\Phi_\delta$ denotes the Moreau envelope of $\Phi_\delta$ (see \cref{prop:moreau-pgsp} in \cref{app:moreau-pgsp}).
\end{enumerate}

For random outputs, stationarity guarantees bound the expected value of the corresponding residual.

The following lemma transfers a PGSP guarantee from the smoothed objective $f_\rho+h$ to $f+h$ by increasing the Goldstein radius by $\rho$, and directly follows from~\citet[Lemma~4]{kornowski2024zo}.

\begin{lemma}\label{lem:smoothing-transfer}
For any $\rho,\nu\geq0$ and any $x$,
$\partial_\nu f_\rho(x)\subseteq\partial_{\rho+\nu}f(x)$.
Consequently, for any $\gamma>0$, if $x$ is a $(\gamma,\nu,\epsilon)$-PGSP for $f_\rho+h$, then
it is a $(\gamma,\rho+\nu,\epsilon)$-PGSP for $f+h$.
\end{lemma}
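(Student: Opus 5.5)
The plan has two parts: prove the inclusion $\partial_\nu f_\rho(x)\subseteq\partial_{\rho+\nu}f(x)$, then deduce the PGSP statement from it by monotonicity of the minimum over a larger set.

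\textbf{Step 1: Clarke subgradients of $f_\rho$.} I would first show that, for every $y$, $\partial f_\rho(y)\subseteq\partial_\rho f(y)$. Since $f$ is locally Lipschitz, Rademacher's theorem gives differentiability almost everywhere. I would then use
\[
\nabla f_\rho(y)=\E_{z\sim\unif(\mathbb{B}(0,1))}\brk*{\nabla f(y+\rho z)}.
\]
This is an average of gradients $\nabla f(y+\rho z)\in\partial f(y+\rho z)$ taken at points of $\mathbb{B}(y,\rho)$. The union $\bigcup_{w\in\mathbb{B}(y,\rho)}\partial f(w)$ is bounded, and because the Clarke subdifferential map is outer semicontinuous with compact values, the union is also closed, hence compact. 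Its convex hull $\partial_\rho f(y)$ is therefore compact, and an expectation of a random vector lying in a compact convex set lies in that set (for example by a separating hyperplane argument). So $\nabla f_\rho(y)\in\partial_\rho f(y)$ wherever $f_\rho$ is differentiable.

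For general Clarke subgradients of $f_\rho$, I would note that $f_\rho$ is locally Lipschitz and use Clarke's gradient characterization: $\partial f_\rho(y)=\operatorname{conv}\{\lim\nabla f_\rho(y_k)\colon y_k\to y\}$. Each $\nabla f_\rho(y_k)$ lies in $\partial_\rho f(y_k)$. By outer semicontinuity of $y\mapsto\partial_\rho f(y)$, limits of such elements lie in $\partial_\rho f(y)$, and convexity of $\partial_\rho f(y)$ handles the convex hull. I expect this outer semicontinuity and closedness to be the main technical point; it is exactly what \citet[Lemma~4]{kornowski2024zo} supplies, so I would cite it there.

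\textbf{Step 2: enlarging the radius.} For $y\in\mathbb{B}(x,\nu)$ and $w\in\mathbb{B}(y,\rho)$, the triangle inequality gives $w\in\mathbb{B}(x,\rho+\nu)$. Hence
\[
\partial f_\rho(y)\subseteq\partial_\rho f(y)\subseteq\partial_{\rho+\nu}f(x).
\]
Taking the union over $y\in\mathbb{B}(x,\nu)$ and then the convex hull, and using that $\partial_{\rho+\nu}f(x)$ is convex, yields $\partial_\nu f_\rho(x)\subseteq\partial_{\rho+\nu}f(x)$.

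\textbf{Step 3: the PGSP consequence.} Suppose $x$ is a $(\gamma,\nu,\epsilon)$-PGSP for $f_\rho+h$, so there is some $g\in\partial_\nu f_\rho(x)$ with $\nrm{G_{\gamma h}(x,g)}\le\epsilon$. By Step 2, this same $g$ belongs to $\partial_{\rho+\nu}f(x)$. The mapping $G_{\gamma h}$ depends only on $h$, $x$ and $g$, not on $f$, so the value $\nrm{G_{\gamma h}(x,g)}$ is unchanged. Therefore the minimum over the larger set $\partial_{\rho+\nu}f(x)$ is at most $\epsilon$. For random outputs the same pointwise inequality holds, so it also holds after taking expectations.
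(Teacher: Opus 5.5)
Your proposal is correct and follows the paper's route. The paper takes the inclusion $\partial_\nu f_\rho(x)\subseteq\partial_{\rho+\nu}f(x)$ directly from \citet[Lemma~4]{kornowski2024zo}, which states this inclusion itself rather than only the outer semicontinuity you attribute to it, and then concludes by monotonicity of the minimum over the larger set exactly as in your Step 3; your Steps 1--2 reconstruct that cited lemma, and for $\rho>0$ the Clarke-gradient/outer-semicontinuity part of Step 1 is unnecessary because $f_\rho$ is continuously differentiable, so $\partial f_\rho(y)=\{\nabla f_\rho(y)\}$ (the case $\rho=0$ is trivial).
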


This lemma lets us analyze the differentiable
function $f_\rho$ even when $f$ is nonsmooth and transfer the
resulting guarantee to $f+h$ at any target $\delta\geq\rho+\nu$.

%% file: section/04_proposed_algorithm.tex
\section{Composite O2NC}\label{sec:algorithm}

\begin{algorithm}[t]
  \caption{\CompositeONC{}}\label{alg:pgsp}
  \DontPrintSemicolon
  \Require{initial point $x_1\in\operatorname{dom}h$, radius $D>0$, block length $T$, number of blocks $K$, query oracle $\mathsf Q$, online learner.}
  \For{$k=1,\ldots,K$}{
    Initialize the online learner.\nllabel{algline:pgsp-init}\;
    \For{$t=1,\ldots,T$}{
      Online learner outputs $u_{k,t}\in\calU_D(x_k)$.\nllabel{algline:pgsp-displacement}\;
      Draw $s_{k,t}\sim\unif[0,1]$ and set $y_{k,t}=x_k+s_{k,t}u_{k,t}$.\nllabel{algline:pgsp-sample}\;
      Obtain $g_{k,t}$ using the available oracle:\nllabel{algline:pgsp-query}\;
      (i) First-order access: $g_{k,t}=\mathsf Q_{\mathrm{fo}}(y_{k,t})$, satisfying \cref{ass:oracle}.\;
      (ii) Zeroth-order access: $g_{k,t}=\mathsf Q_{\mathrm{zo}}(y_{k,t})$ 
      $=\hat g_\rho(y_{k,t};\xi,w)$, defined in \cref{eq:two-point-estimator}.\;
      Give the learner the convex loss in \cref{eq:new_loss}: $\ell_{k,t}(u)=\inpr{g_{k,t},u}+h(x_k+u)-h(x_k)$.\nllabel{algline:pgsp-loss}\;
    }
    Define $\bar g_k=T^{-1}\sum_{t=1}^Tg_{k,t}$.\nllabel{algline:pgsp-average}\;
    Draw $I_k\sim\unif\set{1,\ldots,T}$ independently and set $x_{k+1}=x_k+u_{k,I_k}$.\nllabel{algline:pgsp-next-anchor}\;
  }
  Draw $R\sim\unif\set{1,\ldots,K}$ independently.\nllabel{algline:pgsp-select}\;
  Shift the selected anchor by $u_R^*\in\argmin_{u\in\calU_D(x_R)}\set{\inpr{\bar g_R,u}+h(x_R+u)-h(x_R)}$.\nllabel{algline:pgsp-shift}\;
  \Return{the shifted point $y_R=x_R+u_R^*$.}
\end{algorithm}

\begin{figure}[t]
  \centering
  \includegraphics[width=0.6\columnwidth]{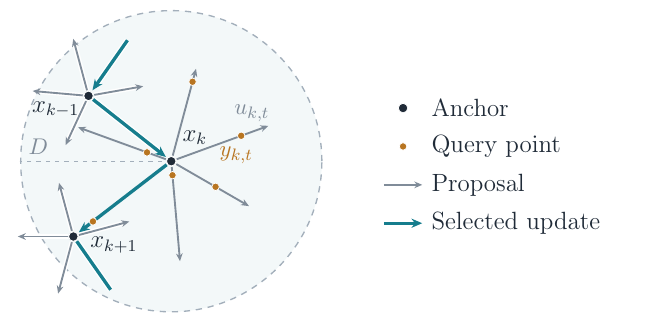}
  \caption{Query points and iterate updates in \CompositeONC{}.
  Each anchor (black dot) is fixed for $T$ rounds.
  The dashed circle denotes $\mathbb B(x_k,D)$.}
  \label{fig:anchor-exploration}
\end{figure}

This section establishes our conversion framework, composite online-to-nonconvex conversion (\CompositeONC{}), and provides the oracle complexity guarantees for the first-order and zeroth-order setting.

\subsection{Proposed conversion framework}
Here we provide \CompositeONC{}, summarized in \Cref{alg:pgsp} and illustrated in \cref{fig:anchor-exploration}.
The versions with first-order access and zeroth-order access differ only
in the query oracle $\mathsf Q$ used in \cref{algline:pgsp-query}.
In what follows, we describe \Cref{alg:pgsp} in detail.

The algorithm runs $K$ blocks of $T$ rounds, and in each block $k \in [K]$ the update is centered at an
anchor $x_k$.  At each anchor, an online learner chooses a direction $u_{k,t}$ from $\calU_D(x_k)\coloneqq\set*{u\in\R^d\colon\nrm{u}\leq D,\ x_k+u\in\operatorname{dom}h}$ in each round $t$ (\cref{algline:pgsp-displacement}).
The algorithm then draws $s_{k,t}\sim\unif[0,1]$, sets
$y_{k,t}=x_k+s_{k,t}u_{k,t}$, and queries
$g_{k,t}=\mathsf Q(y_{k,t})$ (\cref{algline:pgsp-sample,algline:pgsp-query}).
With first-order access, $\mathsf Q$ returns an unbiased stochastic
gradient of $f$ at $y_{k,t}$, as in \cref{ass:oracle}.  With zeroth-order
access, $\mathsf Q$ uses function values at two nearby points to construct an unbiased estimator of
$\nabla f_\rho(y_{k,t})$, as defined in \cref{eq:two-point-estimator}.
The algorithm then gives the learner the loss
\begin{equation}
\ell_{k,t}(u)=\inpr{g_{k,t},u}+h(x_k+u)-h(x_k),
\label{eq:new_loss}
\end{equation}
as in \cref{algline:pgsp-loss}.  This loss is convex in $u$ since $h$ is convex.

Let $\calF_{k,t}$ denote the history before $s_{k,t}$ is drawn, which determines $u_{k,t}$.
Suppose that $\E\brk*{g_{k,t}\mid y_{k,t}}=\nabla f(y_{k,t})$, which holds under first-order access by \cref{ass:oracle} and under zeroth-order access with $f_\rho$ in place of $f$ (\cref{sec:zo-transfer}).
Then, the conditional expectation of the loss can be rewritten as
\begin{align}
  \E\brk*{\ell_{k,t}(u_{k,t})\mid\calF_{k,t}} 
  &=
  \int_0^1\inpr{\nabla f(x_k+su_{k,t}),u_{k,t}}\,\dd s
  +h(x_k+u_{k,t})-h(x_k)\\
  &=\Phi(x_k+u_{k,t})-\Phi(x_k),
  \label{eq:segment-loss-identity}
\end{align}
where the first equality follows from the unbiasedness of $g_{k,t}$ and $s_{k,t}\sim\unif[0,1]$, and the second equality from the well-behavedness of $f$.

After $T$ rounds, the algorithm draws $I_k$
independently and uniformly from $\set{1,\ldots,T}$ and sets
$x_{k+1}=x_k+u_{k,I_k}$ (\cref{algline:pgsp-next-anchor}).
Due to this update rule, averaging over $I_k$ and using \cref{eq:segment-loss-identity} gives
\begin{equation}
\E\brk*{\Phi(x_{k+1})-\Phi(x_k)}
=\E\brk*{\frac{1}{T}\sum_{t=1}^T\ell_{k,t}(u_{k,t})}.
\end{equation}
Thus, we can estimate the function difference $\Phi(x_k+u_{k,t})-\Phi(x_k)$ by the cumulative loss based on~\cref{eq:new_loss}, which is connected to the regret as we will see below.

After the $K$ blocks, the algorithm draws $R$ uniformly from $[K]$ (\cref{algline:pgsp-select}) and returns the shifted point $y_R=x_R+u_R^*$, where $u_R^*$ is the minimizer in \cref{algline:pgsp-shift}.  The analyses in \cref{sec:pgsp,sec:zo-transfer} establish PGSP guarantees at the returned point $y_R$.

We measure the online learner's performance in block $k$ by comparing its
cumulative loss with that of the best fixed direction in
$\calU_D(x_k)$.  We use a common upper bound $R_T$ on this expected regret
over all blocks in $[K]$ to state the guarantees below,
\begin{equation}
  \E\brk*{
    \sum_{t=1}^T \ell_{k,t}(u_{k,t})
    -\min_{u\in\calU_D(x_k)}\sum_{t=1}^T \ell_{k,t}(u)
  }\leq R_T.
  \label{eq:pgsp-regret}
\end{equation}

Euclidean composite objective mirror descent (COMID)~\citep[Theorem~2]{duchi2010} achieves the following bound on $R_T$.
\begin{theorem}\label{thm:comid-regret}
Suppose that $\E\brk*{\nrm{g_{k,t}}^2 \mid y_{k,t}}\leq G^2$ for every $k,t$.
Then, COMID achieves $R_T\leq 2DG\sqrt{T}$.
\end{theorem}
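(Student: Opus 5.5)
We prove \cref{thm:comid-regret} with the standard COMID analysis, treating the block index $k$ as fixed since the learner is reinitialized at every anchor. Write $\psi(u)\coloneqq h(x_k+u)-h(x_k)$, which is convex, proper, closed and satisfies $\psi(0)=0$. Set $\calU\coloneqq\calU_D(x_k)=\mathbb B(0,D)\cap(\operatorname{dom}h-x_k)$, a closed convex set containing $0$, so that $\ell_{k,t}(u)=\inpr{g_{k,t},u}+\psi(u)$. Euclidean COMID with step size $\eta>0$ starts from $u_1=0$ and updates
\[
u_{t+1}=\argmin_{u\in\calU}\set*{\eta\inpr{g_{k,t},u}+\eta\psi(u)+\tfrac12\nrm{u-u_t}^2}.
\]
This argmin is well defined because the objective is strongly convex and proper on $\calU$. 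Each $u_t$ is $\calF_{k,t}$-measurable, which is what the regret definition in \cref{eq:pgsp-regret} requires.

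\textbf{Step 1 (one-step inequality).} Fix any comparator $u\in\calU$. The first-order optimality condition for $u_{t+1}$ says there is $\zeta\in\partial(\psi+\iota_{\calU})(u_{t+1})$ with $\inpr{\eta g_{k,t}+\eta\zeta+u_{t+1}-u_t,\,u-u_{t+1}}\geq0$. Subgradient convexity of $\psi$ and the three-point identity then give
\[
\eta\bigl(\inpr{g_{k,t},u_{t+1}-u}+\psi(u_{t+1})-\psi(u)\bigr)\leq\tfrac12\nrm{u-u_t}^2-\tfrac12\nrm{u-u_{t+1}}^2-\tfrac12\nrm{u_{t+1}-u_t}^2.
\]
Next we write $\inpr{g_{k,t},u_t-u}=\inpr{g_{k,t},u_{t+1}-u}+\inpr{g_{k,t},u_t-u_{t+1}}$. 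Young's inequality bounds $\eta\inpr{g_{k,t},u_t-u_{t+1}}\leq\tfrac{\eta^2}{2}\nrm{g_{k,t}}^2+\tfrac12\nrm{u_t-u_{t+1}}^2$, and the second term cancels the negative term above. Hence
\[
\inpr{g_{k,t},u_t-u}+\psi(u_{t+1})-\psi(u)\leq\frac{\nrm{u-u_t}^2-\nrm{u-u_{t+1}}^2}{2\eta}+\frac{\eta}{2}\nrm{g_{k,t}}^2.
\]

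\textbf{Step 2 (summing and the index shift in $\psi$).} This is the step that needs the most care. The inequality above involves $\psi(u_{t+1})$, while the regret involves $\psi(u_t)$. When we sum over $t=1,\dots,T$, $\sum_t\psi(u_{t+1})$ differs from $\sum_t\psi(u_t)$ by $\psi(u_{T+1})-\psi(u_1)$. Because $u_1=0$ we have $\psi(u_1)=0$, and this is the reason for initializing at $0$. The leftover term $\psi(u_{T+1})$ is not controlled, but we avoid it by not comparing against $\psi(u_{T+1})$ at all, as follows.

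Define the last iterate $u_{T+1}$ but stop the sum at $T$, and rewrite $\sum_{t=1}^T\psi(u_t)=\psi(u_1)+\sum_{t=1}^{T-1}\psi(u_{t+1})$. From the summed inequality we then have to bound $\psi(u_1)-\psi(u)=-\psi(u)$. This term is not signed, since $\psi(u)$ can be negative, and that looks like a genuine problem. Duchi et al.\ avoid it by applying the inequality for rounds $1,\dots,T$ with the regularizer charged one step late, so the bound picks up an extra $\psi(u_1)-\psi(u_{T+1})$. To handle this cleanly, I would instead use the equivalent proximal-step bookkeeping in \citet[Theorem~2]{duchi2010}: their stated bound with $\psi(u_1)=0$ and $\psi\ge\min$ reduces to ours after one extra linear step. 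Concretely, I would include $t=T+1$ with a zero gradient, which adds the term $\psi(u_{T+1})-\psi(u)$ on the left. Then the telescoped $\psi$-terms are $\sum_{t=1}^T(\psi(u_t)-\psi(u))+\bigl(\psi(u_{T+1})-\psi(u)\bigr)-\psi(u_1)$, and we need to discard the extra $\psi(u_{T+1})-\psi(u)$. If this is delicate, I would fall back on citing \citet[Theorem~2]{duchi2010} directly, since the theorem statement is attributed to that result.

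\textbf{Step 3 (expectation and tuning).} Telescoping the distance terms gives at most $\nrm{u}^2/(2\eta)\leq D^2/(2\eta)$, since $u_1=0$ and $\nrm{u}\leq D$. This bound holds pathwise for every $u$, so it also holds for the minimizer of $\sum_t\ell_{k,t}$ over $\calU$, which exists because $\calU$ is compact and the losses are lower semicontinuous. Taking expectations and using the tower property with $\E[\nrm{g_{k,t}}^2\mid y_{k,t}]\leq G^2$, the regret is at most
\[
\frac{D^2}{2\eta}+\frac{\eta G^2T}{2}.
\]
Choosing $\eta=D/(G\sqrt T)$ gives $DG\sqrt T$, which is within the claimed $2DG\sqrt T$. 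The factor-$2$ slack can absorb a diameter bound of $\nrm{u-u_t}\le 2D$ if the iterates are not started at $0$.
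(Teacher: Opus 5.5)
Your Step~1 inequality is correct, but Step~2 has a genuine gap. With your initialization $u_1=0$ it cannot be closed, because the claimed bound is false for that algorithm. Summing Step~1 and converting $\psi(u_{t+1})$ back to $\psi(u_t)$ gives, for every $u\in\calU_D(x_k)$,
\[
\sum_{t=1}^T\bigl(\ell_{k,t}(u_t)-\ell_{k,t}(u)\bigr)\le\frac{\|u-u_1\|^2}{2\eta}+\frac{\eta}{2}\sum_{t=1}^T\|g_{k,t}\|^2+\psi(u_1)-\psi(u_{T+1}).
\]
So with $u_1=0$ you are left with $-\psi(u_{T+1})=h(x_k)-h(x_k+u_{T+1})$. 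This term is unsigned and can be as large as $h(x_k)-\min_{v\in\calU_D(x_k)}h(x_k+v)$, which is unrelated to $D$, $G$, or $T$.

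Neither of your fixes removes it. Adding a zero-gradient round $T+1$ only trades it for the equally unsigned $\psi(u_{T+1})-\psi(u)$. Falling back on Theorem~2 of \citet{duchi2010} also fails: that bound carries $r(u_1)$ and drops $r(u_{T+1})$ using $r\geq0$. Applied to the nonnegative $\psi-\min_{\calU_D(x_k)}\psi$ with $u_1=0$, it reproduces exactly the extra term above.

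The failure is real. Take $d=1$, $x_k=0$, $h(x)=-\lambda x$ on $[-1,1]$ and $+\infty$ outside, $D=1/2$, and $g_{k,t}\equiv0$ (admissible for every $G>0$). Then $\ell_{k,t}(u)=-\lambda u$ and the best fixed direction is $u=D$. COMID started at $u_1=0$ pays regret at least $\lambda D$ in round~$1$ alone, whatever $\eta$ is. This exceeds $2DG\sqrt T$ as soon as $\lambda>2G\sqrt T$.

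The missing idea, which is what the paper does, is to start at a minimizer of the regularizer. Replace $\psi$ by $r_k(u)=h(x_k+u)-\min_{v\in\calU_D(x_k)}h(x_k+v)\geq0$. This differs from $\psi$ by a constant, so it changes neither the COMID updates nor the regret. Then initialize at $u_{k,1}\in\argmin_{v\in\calU_D(x_k)}h(x_k+v)$. Now $r_k(u_{k,1})=0$ and $r_k(u_{k,T+1})\geq0$, so the boundary term $r_k(u_{k,1})-r_k(u_{k,T+1})$ is nonpositive and can be dropped.

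The price is that $u_{k,1}$ is no longer the center of the ball, so only $\|u-u_{k,1}\|\leq2D$ is available. This gives $2D^2/\eta+\eta TG^2/2$, and $\eta=2D/(G\sqrt T)$ yields $2DG\sqrt T$. So the factor $2$ in the theorem is not slack that your $DG\sqrt T$ fits inside. It is exactly the cost of the initialization that removes the uncontrolled regularizer term. Your closing remark about a $2D$ diameter points this way, but the essential point is \emph{which} starting point to use and why.
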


To relate the best fixed loss in \cref{eq:pgsp-regret} to PGSP, we define the
local composite gap for $x\in\operatorname{dom}h$, $g\in\R^d$, and $D>0$ by
\begin{align}
  \calC_D(x,g)\coloneqq-\frac{1}{D}\min_{u\in\calU_D(x)}
  \set*{\inpr{g,u}+h(x+u)-h(x)}.
  \label{eq:local-gap}
\end{align}
It is nonnegative since $u=0\in\calU_D(x)$ gives the value $0$.  Let
$\bar g_k\coloneqq T^{-1}\sum_{t=1}^T g_{k,t}$ be the average of the gradient
estimates returned in block $k$, as in \cref{algline:pgsp-average}.
Then
the best fixed average loss is expressed in terms of the local composite gap $\calC_D(x,g)$ as follows:
\begin{align}
  \frac{1}{T}\min_{u\in\calU_D(x_k)}\sum_{t=1}^T\ell_{k,t}(u)
  &=\min_{u\in\calU_D(x_k)}\set*{\inpr{\bar g_k,u}+h(x_k+u)-h(x_k)}\\
  &=-D\calC_D(x_k,\bar g_k).
  \label{eq:regret-comparator-gap}
\end{align}

At \cref{algline:pgsp-shift}, \Cref{alg:pgsp} minimizes the local
composite model around the selected anchor $x_R$ and returns the shifted
point $y_R$.
This shift allows us to obtain the convergence guarantee to the PGSP simultaneously for every $\gamma>0$, as the following lemma shows.
\begin{lemma}\label{lem:shifted-conversion}
Fix $x\in\operatorname{dom}h$, $D>0$, and $\nu\geq2D$.
Let $g\in\partial_D f(x)$ and $\hat g\in\R^d$.
Let $u^* \in \argmin_{u\in\calU_D(x)}\set{ \inpr{\hat g,u}+h(x+u)-h(x) }$ and let $y=x+u^*$.
Then, for every $\gamma>0$,
\begin{equation}
  \min_{v\in\partial_\nu f(y)}\nrm{G_{\gamma h}(y,v)}
  \leq\calC_D(x,\hat g)+\nrm{g-\hat g}.
  \label{eq:shifted-conversion-bound}
\end{equation}
\end{lemma}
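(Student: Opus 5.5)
The plan is to reduce the statement to a purely convex-analytic estimate at $y$ with the fixed vector $\hat g$, after two elementary transfers. First, I transfer $g$ from $x$ to $y$. Since $u^*\in\calU_D(x)$, we have $\nrm{y-x}=\nrm{u^*}\le D$. Hence every $z\in\mathbb{B}(x,D)$ satisfies $\nrm{z-y}\le 2D\le\nu$, so $\mathbb{B}(x,D)\subseteq\mathbb{B}(y,\nu)$. By the definition \cref{eq:goldstein-subdifferential} this gives $\partial_D f(x)\subseteq\partial_\nu f(y)$, and in particular $g\in\partial_\nu f(y)$. The left-hand side of \cref{eq:shifted-conversion-bound} is therefore at most $\nrm{G_{\gamma h}(y,g)}$.

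Second, I replace $g$ by $\hat g$. By \cref{eq:prox-gradient-map},
\[
G_{\gamma h}(y,g)-G_{\gamma h}(y,\hat g)=\tfrac1\gamma\bigl(\operatorname{prox}_{\gamma h}(y-\gamma\hat g)-\operatorname{prox}_{\gamma h}(y-\gamma g)\bigr).
\]
Nonexpansiveness of the proximal operator then gives $\nrm{G_{\gamma h}(y,g)}\le\nrm{G_{\gamma h}(y,\hat g)}+\nrm{g-\hat g}$. It remains to prove $\nrm{G_{\gamma h}(y,\hat g)}\le\calC_D(x,\hat g)$ for every $\gamma>0$.

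For this I use the optimality condition of $u^*$. The map $u\mapsto\inpr{\hat g,u}+h(x+u)$ is minimized over the ball $\mathbb{B}(0,D)$. Since $u=0$ is an interior point of the ball and $x\in\operatorname{dom}h$, the sum rule for subdifferentials applies. It yields some $w\in\partial h(y)$ and $\lambda\ge0$ with
\[
-\hat g=w+\lambda\,u^*/D,
\]
where $\lambda=0$ whenever $\nrm{u^*}<D$, and $\nrm{u^*}=D$ otherwise (the normal cone of the ball). Because $w\in\partial h(y)$, we have $y=\operatorname{prox}_{\gamma h}(y+\gamma w)$. Nonexpansiveness again gives
\[
\nrm{G_{\gamma h}(y,\hat g)}=\tfrac1\gamma\nrm{\operatorname{prox}_{\gamma h}(y+\gamma w)-\operatorname{prox}_{\gamma h}(y-\gamma\hat g)}\le\nrm{\hat g+w}=\lambda,
\]
using $\nrm{u^*}=D$ when $\lambda>0$. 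This bound is uniform in $\gamma$, which is the source of the "every $\gamma$" claim.

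Finally, I show $\lambda\le\calC_D(x,\hat g)$. The case $\lambda=0$ is trivial since $\calC_D\ge0$. If $\lambda>0$, then $\nrm{u^*}=D$, and
\[
D\calC_D(x,\hat g)=-\inpr{\hat g,u^*}-h(y)+h(x)=\inpr{w,u^*}+\lambda D-h(y)+h(x).
\]
The subgradient inequality at $y$ evaluated at $x=y-u^*$ gives $h(x)\ge h(y)-\inpr{w,u^*}$, so $D\calC_D\ge\lambda D$. Combining the three steps yields \cref{eq:shifted-conversion-bound}.

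The main obstacle is this last optimality-condition step. It needs the correct constraint qualification, which holds because $0$ is interior to the ball, and the observation that only the radial multiplier $\lambda$ survives in the bound; everything else is routine nonexpansiveness.
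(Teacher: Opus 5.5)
Your proof is correct and follows the paper's argument essentially step for step. The steps are the ball inclusion $\mathbb{B}(x,D)\subseteq\mathbb{B}(y,\nu)$ to place $g$ in $\partial_\nu f(y)$, nonexpansiveness of $\operatorname{prox}_{\gamma h}$ to replace $g$ by $\hat g$, and the KKT conditions for $u^*$ with the subgradient inequality at $y$ to get $\nrm{G_{\gamma h}(y,\hat g)}\leq\calC_D(x,\hat g)$, which is the paper's \cref{lem:shifted-local-minimizer}. The differences are cosmetic: you get $\nrm{G_{\gamma h}(y,\hat g)}\leq\nrm{\hat g+w}$ in one line from $y=\operatorname{prox}_{\gamma h}(y+\gamma w)$, whereas the paper proves it separately in \cref{lem:inclusion-to-proximal} by adding two subgradient inequalities, and you state the constraint qualification explicitly.
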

As discussed in the introduction, existing approaches~\citep{nesterov2013composite,parikh2014proximal,ghadimi2016minibatch,chen2025} use the parameter of the proximal map as an update step size.  
By contrast, our algorithm does not require tuning $\gamma$, and the complexity for finding a PGSP is independent of $\gamma$.

%% file: section/05_first_order.tex
\subsection{\texorpdfstring{First-order complexity}{First-order complexity}}\label{sec:pgsp}

We first analyze \Cref{alg:pgsp} with first-order access to $f$,
setting $\mathsf Q=\mathsf Q_{\mathrm{fo}}$, where
$\mathsf Q_{\mathrm{fo}}(y)$ calls the stochastic gradient oracle of
\cref{ass:oracle} at $y$.

Let $\widetilde g_k\coloneqq T^{-1}\sum_{t=1}^T\nabla f(y_{k,t})$ be the average
gradient in block $k$.  Since all query points lie in $\mathbb B(x_k,D)$,
we have $\widetilde g_k\in\partial_D f(x_k)$.
By \cref{lem:shifted-conversion}, it suffices to upper bound the local composite gap $\calC_D(x_R,\bar g_R)$ and the gradient estimation error $\nrm{\bar g_R-\widetilde g_R}$.
The following lemma gives these bounds.
\begin{lemma}\label{lem:regret-gap-noise}
Suppose that $f$ is well-behaved and that the oracle satisfies
conditions (i) and (ii) of \cref{ass:oracle}.
If an online learner satisfies \cref{eq:pgsp-regret}, then
\begin{equation}
  \E\brk*{\calC_D(x_R,\bar g_R)}\leq\frac{\Delta}{DK}+\frac{R_T}{DT},\quad \E\brk*{\nrm{\bar g_R-\widetilde g_R}}\leq\frac{\sigma}{\sqrt T}.
\end{equation}
\end{lemma}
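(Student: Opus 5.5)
We prove the two bounds separately. Both follow from the per-block identities already established: the segment-loss identity \cref{eq:segment-loss-identity} and the comparator identity \cref{eq:regret-comparator-gap}.

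For the gap bound, we start from the regret inequality \cref{eq:pgsp-regret}. Dividing by $T$ and using \cref{eq:regret-comparator-gap} gives, for each block $k$,
\begin{equation}
  \E\brk*{\frac{1}{T}\sum_{t=1}^T\ell_{k,t}(u_{k,t})}+D\,\E\brk*{\calC_D(x_k,\bar g_k)}\leq\frac{R_T}{T}.
\end{equation}
Next we replace the first term by $\E\brk{\Phi(x_{k+1})-\Phi(x_k)}$. To do so, we take conditional expectations given $\calF_{k,t}$ and apply \cref{eq:segment-loss-identity}. This uses only condition (i) of \cref{ass:oracle}, the well-behavedness of $f$, and the fact that $u_{k,t}$ is $\calF_{k,t}$-measurable. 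We then average over the independent index $I_k$.

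The one point needing care is that $\ell_{k,t}(u_{k,t})$ involves $g_{k,t}$, which is drawn after $u_{k,t}$ is fixed. The tower property handles this. Moreover, the expectation of the minimum is taken jointly with the losses, so no exchange of $\min$ and $\E$ is needed: the regret bound is stated directly for the expectation of the difference.

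Summing over $k=1,\dots,K$ telescopes the $\Phi$ terms. Since $\Phi(x_{K+1})\geq\inf\Phi$, we get $-\Delta+D\sum_k\E\brk{\calC_D(x_k,\bar g_k)}\leq KR_T/T$. Because $R$ is uniform on $[K]$ and independent of everything else, $\E\brk{\calC_D(x_R,\bar g_R)}=K^{-1}\sum_k\E\brk{\calC_D(x_k,\bar g_k)}$. Dividing by $DK$ yields the first claim.

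For the noise bound, write $\bar g_k-\widetilde g_k=T^{-1}\sum_t\zeta_{k,t}$ with $\zeta_{k,t}=g_{k,t}-\nabla f(y_{k,t})$. Each $\zeta_{k,t}$ is a martingale difference with respect to the filtration generated by the history up to and including $y_{k,t}$. Indeed, the oracle is independent of the preceding history, so $\E\brk{\zeta_{k,t}\mid\text{history},y_{k,t}}=0$ by (i). Therefore cross terms $\E\inpr{\zeta_{k,t},\zeta_{k,t'}}$ with $t<t'$ vanish after conditioning on the history before $\zeta_{k,t'}$ is drawn. Condition (ii) then gives $\E\nrm{\bar g_k-\widetilde g_k}^2\leq\sigma^2/T$, and Jensen's inequality gives $\E\nrm{\bar g_k-\widetilde g_k}\leq\sigma/\sqrt T$. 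Averaging over the independent $R$ preserves this bound.

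The main obstacle is the bookkeeping of filtrations in the first step: verifying that \cref{eq:segment-loss-identity} applies inside the expectation of the regret expression. Conceptually it is routine.
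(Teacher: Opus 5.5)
Your proposal is correct and follows the paper's proof essentially step for step: for the gap, you combine the regret bound with the comparator identity, substitute the segment-loss identity, telescope over blocks, and average over the uniform index $R$; for the noise term, you use vanishing cross terms from conditional unbiasedness together with Jensen's inequality. The only cosmetic difference is that you apply Jensen's inequality block by block and then average over $R$, whereas the paper first averages the squared error over $R$ and then applies Jensen's inequality once; both routes give the same bound $\sigma/\sqrt{T}$.
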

\begin{proof}
By \cref{eq:pgsp-regret,eq:regret-comparator-gap}, we have
\begin{align}
  D\E\brk*{\calC_D(x_k,\bar g_k)}
  &\leq
  \frac{R_T}{T}
  -
  \E\brk*{\frac{1}{T}\sum_{t=1}^T\ell_{k,t}(u_{k,t})}=
  \frac{R_T}{T}+\E\brk*{\Phi(x_k)-\Phi(x_{k+1})},
\end{align}
where the last inequality uses \cref{eq:segment-loss-identity}.
Summing over $k\in[K]$ and using $R\sim\unif\set{1,\ldots,K}$ gives
\begin{align}
  \E\brk*{\calC_D(x_R,\bar g_R)}
  &\leq
  \frac{R_T}{DT}
  +
  \frac{\Phi(x_1)-\E\brk*{\Phi(x_{K+1})}}{DK}
  \leq
  \frac{R_T}{DT}
  +
  \frac{\Delta}{DK},
\end{align}
which completes the proof of the first statement.

For the second statement, by Jensen's inequality,
\begin{align}
  \E\brk*{\nrm{\bar g_R-\widetilde g_R}}
  \leq\sqrt{\E\brk*{\nrm{\bar g_R-\widetilde g_R}^2}}
  =\sqrt{
    \frac{1}{KT^2}\sum_{k=1}^K\sum_{t=1}^T
    \E\brk*{\nrm{g_{k,t}-\nabla f(y_{k,t})}^2}}
  \leq\frac{\sigma}{\sqrt T},
\end{align}
where the equality follows from $R \sim \mathsf{Unif}\set{1,\dots,K}$ and the conditional unbiasedness of $g_{k,t}$ for $\nabla f(y_{k,t})$ removes the cross terms,
and the last inequality follows from the variance bound in \cref{ass:oracle}.
\end{proof}

Combining \cref{lem:shifted-conversion,lem:regret-gap-noise} with the COMID
regret bound in \cref{thm:comid-regret} gives the following stochastic gradient query complexity.
\begin{theorem}\label{thm:first-order-complexity}
Suppose that \cref{ass:oracle} holds, fix $\delta>0$ and
$0<\epsilon\leq\min\set*{G+\sigma,\Delta/\delta}$, and let $D=\delta/2$,
$T=\max\set*{1,\ceil*{4(2G+\sigma)^2/\epsilon^2}}$, and
$K=\max\set*{1,\ceil*{2\Delta/(D\epsilon)}}$.
Then \Cref{alg:pgsp} with the COMID learner of \cref{thm:comid-regret} and $\mathsf Q=\mathsf Q_{\mathrm{fo}}$ outputs $y_R$ that is a $(\gamma,\delta,\epsilon)$-PGSP in expectation
for every $\gamma>0$, using
$O\prn[\Big]{\frac{\Delta(G+\sigma)^2}
  {\delta\epsilon^3}}$ stochastic gradient queries.
\end{theorem}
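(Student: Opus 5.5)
We plan to chain \cref{lem:shifted-conversion}, \cref{lem:regret-gap-noise}, and \cref{thm:comid-regret}, and then check that the parameter choices make each term at most a fraction of $\epsilon$.

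First, fix the realization of the run and apply \cref{lem:shifted-conversion}. Take $x=x_R$, $\hat g=\bar g_R$, $g=\widetilde g_R$, and $\nu=\delta=2D$. The hypothesis $g\in\partial_D f(x_R)$ holds because every query point $y_{R,t}=x_R+s_{R,t}u_{R,t}$ lies in $\mathbb B(x_R,D)$. Since $f$ is differentiable, $\nabla f(y_{R,t})\in\partial f(y_{R,t})$, so the average $\widetilde g_R$ lies in the convex hull defining $\partial_D f(x_R)$. The output $y_R$ is exactly the shifted point of the lemma. Hence, for every $\gamma>0$,
\[
\min_{v\in\partial_\delta f(y_R)}\nrm{G_{\gamma h}(y_R,v)}\leq\calC_D(x_R,\bar g_R)+\nrm{\bar g_R-\widetilde g_R}
\]
holds pointwise.

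Next, take expectations and invoke \cref{lem:regret-gap-noise}. Its hypotheses hold: $f$ is well-behaved by \cref{ass:oracle} and \citet[Proposition~2]{cutkosky2023}, and COMID satisfies \cref{eq:pgsp-regret} with $R_T\leq 2DG\sqrt T$ by \cref{thm:comid-regret}. This gives the bound
\[
\frac{\Delta}{DK}+\frac{2G}{\sqrt T}+\frac{\sigma}{\sqrt T}=\frac{\Delta}{DK}+\frac{2G+\sigma}{\sqrt T}.
\]
The choice $T\geq 4(2G+\sigma)^2/\epsilon^2$ makes the second term at most $\epsilon/2$. The choice $K\geq 2\Delta/(D\epsilon)$ makes the first term at most $\epsilon/2$. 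The total is at most $\epsilon$, uniformly in $\gamma$.

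Finally, count oracle calls. There is one call per round, for $KT$ in total. The conditions $\epsilon\leq G+\sigma$ and $\epsilon\leq\Delta/\delta$ ensure the ceiling terms dominate the maxima with $1$, so that $T=O((G+\sigma)^2/\epsilon^2)$ and $K=O(\Delta/(\delta\epsilon))$. This yields $KT=O(\Delta(G+\sigma)^2/(\delta\epsilon^3))$; the final shift step requires no oracle queries.

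The main subtlety is justifying the uniform regret bound. \cref{thm:comid-regret} needs the conditional second-moment bound $G$ at each query point, which \cref{ass:oracle}(iii) provides. It also needs the learner to be reinitialized per block over the feasible set $\calU_D(x_k)$, whose diameter is at most $2D$, so that the bound $R_T$ holds for every block.
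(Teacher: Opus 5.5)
Your proposal is correct and follows essentially the same route as the paper's proof. You apply \cref{lem:shifted-conversion} with $x=x_R$, $g=\widetilde g_R$, $\hat g=\bar g_R$, and $\nu=\delta=2D$, bound both terms in expectation with \cref{lem:regret-gap-noise} and $R_T\le 2DG\sqrt T$, split $\epsilon$ into halves, and count $KT$ queries; your added justifications ($\nabla f(y_{R,t})\in\partial f(y_{R,t})$ giving $\widetilde g_R\in\partial_D f(x_R)$, and the ceiling terms dominating the maxima with $1$ when $\epsilon\le\min\{G+\sigma,\Delta/\delta\}$) are correct and make explicit what the paper leaves implicit.
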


This bound matches the $O(\delta^{-1}\epsilon^{-3})$ complexity of \citet{cutkosky2023} for $h=0$.
Since $h=0$ is a special case of our setting, their joint lower bound $\Omega(\delta^{-1}\epsilon^{-3})$ also applies to our composite problem class, establishing optimal dependence on $\delta$ and $\epsilon$.

\begin{proof}
Since $\widetilde g_R\in\partial_D f(x_R)$ and $\delta=2D$,
using \cref{lem:shifted-conversion} with $x=x_R$, $g=\widetilde g_R$, and $\hat g=\bar g_R$ gives, for every
$\gamma>0$,
\begin{align}
  \E\brk*{\min_{g\in\partial_\delta f(y_R)}\nrm{G_{\gamma h}(y_R,g)}} 
  &\leq 
  \E\brk*{\calC_D(x_R,\bar g_R)+\nrm{\bar g_R-\widetilde g_R}}
  \\
  &\leq\frac{\Delta}{DK}+\frac{R_T}{DT}+\frac{\sigma}{\sqrt T}\\
  &\leq\frac{\Delta}{DK}+\frac{2G+\sigma}{\sqrt T}
  \leq\frac{\epsilon}{2}+\frac{\epsilon}{2}=\epsilon,
\end{align}
where the second inequality follows from \cref{lem:regret-gap-noise} and the third inequality uses $R_T\leq2DG\sqrt T$ from
\cref{thm:comid-regret}.
Since $\epsilon\leq\Delta/\delta$ and $\epsilon\leq G+\sigma$,
the parameter choices give $K=O\prn*{\Delta/(\delta\epsilon)}$
and $T=O\prn*{(G+\sigma)^2/\epsilon^2}$.
Their product $KT$ gives the stated number of stochastic gradient queries, which completes the proof.
\end{proof}

%% file: section/06_zeroth_order.tex
\subsection{\texorpdfstring{Zeroth-order complexity}{Zeroth-order complexity}}\label{sec:zo-transfer}
Here, we analyze \Cref{alg:pgsp} with zeroth-order access under \cref{ass:zo-estimator}.
Following \citet{kornowski2024zo}, we apply the algorithm to $\Phi_\rho\coloneqq f_\rho+h$, where $f_\rho$ is the uniformly smoothed function in \cref{sec:setting} with radius $\rho>0$.  Write
$\Delta_\rho\coloneqq\Phi_\rho(x_1)-\inf_x\Phi_\rho(x)$ for its initial gap.

At each query point $y$, draw independent samples
$\xi$ and $w\sim\unif(\mathbb{S}^{d-1})$, and set
$\mathsf Q_{\mathrm{zo}}(y)\coloneqq\hat g_\rho(y;\xi,w)$, where
\begin{equation}
  \hat g_\rho(y;\xi,w)\coloneqq\frac{d}{2\rho}\prn*{F(y+\rho w,\xi)-F(y-\rho w,\xi)}w.
  \label{eq:two-point-estimator}
\end{equation}
Each estimator call uses two function-value queries.
The following lemma gives the mean and a second-moment bound for this estimator.

\begin{lemma}\label{lem:two-point-estimator}
Under \cref{ass:zo-estimator}, set
$G_{\mathrm{zo}}\coloneqq4(2\pi)^{1/4}L_0\sqrt{d}$.
Then $f$ is $L_0$-Lipschitz, and for
every $y \in \R^d$, the estimator \cref{eq:two-point-estimator} satisfies
\begin{align}
  &\E_{\xi, w}\brk*{\hat g_\rho(y;\xi,w)\mid y}
  =\nabla f_\rho(y),
  \label{eq:two-point-moment1}\\
  &\E_{\xi, w}\brk*{\nrm{\hat g_\rho(y;\xi,w)}^2\mid y}
  \leq G_{\mathrm{zo}}^2.
  \label{eq:two-point-moment2}
\end{align}
Consequently, $\mathsf Q_{\mathrm{zo}}$ satisfies \cref{ass:oracle}
for $f_\rho$ with $G=\sigma=G_{\mathrm{zo}}$.
\end{lemma}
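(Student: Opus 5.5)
The plan has three parts: Lipschitzness of $f$, unbiasedness of the estimator, and a second-moment bound via concentration on the sphere.

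\emph{Lipschitzness.} For any $x,y$, Jensen's inequality and \cref{ass:zo-estimator} give
\[
\abs{f(x)-f(y)}\leq\E_\xi\abs{F(x,\xi)-F(y,\xi)}\leq\E_\xi[L(\xi)]\nrm{x-y}\leq\sqrt{\E_\xi[L(\xi)^2]}\,\nrm{x-y}\leq L_0\nrm{x-y}.
\]
So $f$ is $L_0$-Lipschitz. Consequently $f_\rho$ is well defined and differentiable, as recalled in \cref{sec:setting}.

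\emph{Mean.} First condition on $\xi$ and take the expectation over $w$. By symmetry of $w\mapsto -w$, the two terms of the estimator contribute equally, so
\[
\E_w\brk[\big]{\hat g_\rho(y;\xi,w)}=\frac{d}{\rho}\E_w\brk{F(y+\rho w,\xi)w}.
\]
The classical sphere-to-ball identity (divergence/Stokes theorem, as in Flaxman et al.\ and \citet{kornowski2024zo}) states $\frac{d}{\rho}\E_{w}[\phi(y+\rho w)w]=\nabla \phi_\rho(y)$ for Lipschitz $\phi$, where $\phi_\rho$ is the ball smoothing. Apply it with $\phi=F(\cdot,\xi)$. Then take the expectation over $\xi$, exchanging expectation and gradient by dominated convergence using the integrable Lipschitz constant $L(\xi)$. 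This yields $\nabla f_\rho(y)$.

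\emph{Second moment.} Since $\nrm{w}=1$,
\[
\nrm{\hat g_\rho}^2=\frac{d^2}{4\rho^2}\bigl(F(y+\rho w,\xi)-F(y-\rho w,\xi)\bigr)^2.
\]
Fix $\xi$ and set $\psi(w)=F(y+\rho w,\xi)$. Then $\psi$ is $\rho L(\xi)$-Lipschitz on the sphere. Subtracting the mean $\E_w\psi$ gives $(a-b)^2\leq 2(a-c)^2+2(b-c)^2$ with $c=\E_w\psi$, and since $-w$ is also uniform,
\[
\E_w(\psi(w)-\psi(-w))^2\leq 4\,\E_w(\psi(w)-\E\psi)^2.
\]
This is the key step and the main obstacle: the naive bound gives order $d^2L_0^2$, and the $\sqrt d$ improvement relies on concentration of Lipschitz functions on $\mathbb S^{d-1}$. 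A standard result (e.g.\ \citet[Lemma~9]{shamir2017zo}, Lévy concentration) gives $\E_w(\psi-\E\psi)^4\leq c\,\rho^4L(\xi)^4/d^2$ with an explicit constant. Here I would use the version giving $\sqrt{2\pi}$-type constants, which is where the factor $(2\pi)^{1/4}$ in $G_{\mathrm{zo}}$ should come from. Using the fourth moment via Cauchy--Schwarz, or directly a variance bound $\E(\psi-\E\psi)^2\leq c'\rho^2L(\xi)^2/d$, I would obtain
\[
\E_w\nrm{\hat g_\rho}^2\leq \frac{d^2}{4\rho^2}\cdot 4\cdot c'\frac{\rho^2L(\xi)^2}{d}=c'dL(\xi)^2.
\]
Taking the expectation over $\xi$ gives at most $c'dL_0^2$. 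I would track constants so that $c'\leq 16\sqrt{2\pi}$, i.e.\ $G_{\mathrm{zo}}^2=16\sqrt{2\pi}L_0^2d$. If the constant from the variance route does not land exactly, I would follow the proof of \citet[Lemma~3]{kornowski2024zo}, which states this bound.

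\emph{Consequence.} Conditions (i) and (iii) of \cref{ass:oracle} for $f_\rho$ follow directly from the mean and second-moment bounds. Condition (ii) follows from
\[
\E\nrm{\hat g-\nabla f_\rho}^2=\E\nrm{\hat g}^2-\nrm{\nabla f_\rho}^2\leq G_{\mathrm{zo}}^2.
\]
Independence from the history holds because fresh $(\xi,w)$ are drawn at each query.
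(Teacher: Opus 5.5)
Your proposal is correct and takes essentially the paper's approach. The Lipschitz step via Jensen and Cauchy--Schwarz is identical, and for \cref{eq:two-point-moment1,eq:two-point-moment2} the paper simply invokes \citet[Lemma~7]{kornowski2024zo}, whose argument is the sphere-to-ball identity plus the sphere-concentration variance bound you reconstruct, giving $\E\nrm{\hat g_\rho}^2\leq16\sqrt{2\pi}\,dL_0^2=G_{\mathrm{zo}}^2$. Your explicit check of the ``consequently'' clause, bounding the variance by the second moment, fills in a step the paper leaves implicit.
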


Using the argument of \cref{thm:first-order-complexity} for $\Phi_\rho$ and
\cref{lem:smoothing-transfer} gives the guarantee for $f+h$ stated below.

\begin{theorem}\label{thm:zeroth-order-complexity}
Suppose that \cref{ass:zo-estimator} holds, fix $\delta>0$ and
$0<\epsilon\leq L_0$, and let $D=\rho=\delta/3$,
$T=\max\set*{1,\ceil*{576\sqrt{2\pi}dL_0^2/\epsilon^2}}$, and
$K=\max\set*{1,\ceil*{2(\Delta+\delta L_0)/(D\epsilon)}}$.
Then \Cref{alg:pgsp} with the COMID learner of \cref{thm:comid-regret} and $\mathsf Q=\mathsf Q_{\mathrm{zo}}$ outputs $y_R$ that is a $(\gamma,\delta,\epsilon)$-PGSP in expectation
for every $\gamma>0$, using
$O\prn*{\frac{dL_0^2(\Delta+\delta L_0)}
  {\delta\epsilon^3}}$ function-value queries.
\end{theorem}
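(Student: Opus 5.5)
The plan is to run the first-order argument of \cref{thm:first-order-complexity} on the smoothed objective $\Phi_\rho=f_\rho+h$ and then transfer the guarantee with \cref{lem:smoothing-transfer}.

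\textbf{Step 1: oracle and smoothness.} By \cref{lem:two-point-estimator}, $f$ is $L_0$-Lipschitz, and $\mathsf Q_{\mathrm{zo}}$ satisfies \cref{ass:oracle} for $f_\rho$ with $G=\sigma=G_{\mathrm{zo}}$. The function $f_\rho$ is differentiable and locally Lipschitz, hence well-behaved, so the identity \cref{eq:segment-loss-identity} holds with $\Phi_\rho$ in place of $\Phi$. Consequently \cref{lem:regret-gap-noise} applies with $\Delta$ replaced by $\Delta_\rho$ and $\widetilde g_k\coloneqq T^{-1}\sum_t\nabla f_\rho(y_{k,t})\in\partial_D f_\rho(x_k)$. \cref{thm:comid-regret} gives $R_T\le 2DG_{\mathrm{zo}}\sqrt T$.

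\textbf{Step 2: bound the initial gap.} Since $\abs{f_\rho-f}\le\rho L_0$ pointwise, we have $\Delta_\rho\le\Delta+2\rho L_0=\Delta+2\delta L_0/3\le\Delta+\delta L_0$. It is worth stating that $\inf\Phi_\rho\ge\inf\Phi-\rho L_0$, so this bound is valid even when $h$ takes the value $+\infty$.

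\textbf{Step 3: apply the shifted conversion to $f_\rho$.} Apply \cref{lem:shifted-conversion} to $f_\rho$ with $\nu=2D$, $x=x_R$, $g=\widetilde g_R$, and $\hat g=\bar g_R$. For every $\gamma>0$ this gives
\[
\E\Bigl[\min_{v\in\partial_{2D}f_\rho(y_R)}\nrm{G_{\gamma h}(y_R,v)}\Bigr]\le\frac{\Delta_\rho}{DK}+\frac{2G_{\mathrm{zo}}+G_{\mathrm{zo}}}{\sqrt T}.
\]
We then check that each term is at most $\epsilon/2$. For the first term, $K\ge 2(\Delta+\delta L_0)/(D\epsilon)$ gives $\Delta_\rho/(DK)\le\epsilon/2$. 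For the second term, $3G_{\mathrm{zo}}=12(2\pi)^{1/4}L_0\sqrt d$, so $(3G_{\mathrm{zo}})^2=144\sqrt{2\pi}\,dL_0^2$. Requiring $3G_{\mathrm{zo}}/\sqrt T\le\epsilon/2$ means $T\ge 576\sqrt{2\pi}\,dL_0^2/\epsilon^2$, which matches the stated $T$ exactly.

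\textbf{Step 4: transfer back to $f$.} By \cref{lem:smoothing-transfer} with $\nu=2D$, we have $\partial_{2D}f_\rho(y_R)\subseteq\partial_{\rho+2D}f(y_R)=\partial_\delta f(y_R)$. The minimum over the larger set is no bigger, so $y_R$ is a $(\gamma,\delta,\epsilon)$-PGSP in expectation for $f+h$.

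\textbf{Step 5: count queries.} The count is $2KT$. Using $\epsilon\le L_0$, we have $T=O(dL_0^2/\epsilon^2)$. Since $(\Delta+\delta L_0)/(D\epsilon)\ge 3L_0/\epsilon\ge 3$, we have $K=O((\Delta+\delta L_0)/(\delta\epsilon))$. The product is the stated bound.

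\textbf{Main obstacle.} The only delicate point is confirming that \cref{lem:shifted-conversion} and the conversion identity legitimately apply to $f_\rho$ rather than $f$. This needs the well-behavedness of $f_\rho$ and the membership $\widetilde g_R\in\partial_D f_\rho(x_R)$; both follow from differentiability of $f_\rho$ and the fact that all query points lie in $\mathbb B(x_k,D)$. Beyond that, the argument is constant bookkeeping.
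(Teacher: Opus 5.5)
Your proposal is correct and follows the paper's proof essentially step for step. Like the paper, you run the first-order argument on $\Phi_\rho=f_\rho+h$ with $G=\sigma=G_{\mathrm{zo}}$, bound $\Delta_\rho\le\Delta+2\rho L_0\le\Delta+\delta L_0$, check that the stated $T$ and $K$ make each term at most $\epsilon/2$, and transfer from $\partial_{2D}f_\rho$ to $\partial_\delta f$ via \cref{lem:smoothing-transfer} using $\rho+2D=\delta$. Your extra remarks (well-behavedness of $f_\rho$, the bound $\inf\Phi_\rho\ge\inf\Phi-\rho L_0$ when $h$ takes the value $+\infty$, and absorbing the $\max$ and ceilings via $\epsilon\le L_0$) only make explicit details the paper leaves implicit.
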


This bound improves on the $O(d^{3/2}\delta^{-1}\epsilon^{-4})$ and $O(d^{3/2}\delta^{-1}\epsilon^{-3})$ complexities obtained by \citet{chen2025} with minibatch estimation and variance reduction, respectively.
It also improves on the $O(d^{3/2}\delta^{-1}\epsilon^{-4})$ function-value query complexity for PGSPs obtained from \citet{pougkakiotis2026inexact} under the oracle conditions detailed in \cref{app:moreau-pgsp}.
It matches the $O(d\delta^{-1}\epsilon^{-3})$ complexity of \citet{kornowski2024zo} for $h=0$ and is optimal in each of $d$, $\delta$, and $\epsilon$ individually.

%% file: section/07_smooth_setting.tex
\section{Smooth setting}\label{sec:smooth}

This section derives oracle complexity bounds for finding an
$\epsilon$-stationary point of $\Phi$ when $f$ is smooth.
In particular, this section assumes that $f$ is $L_1$-smooth, that is,
$\nrm{\nabla f(x)-\nabla f(y)}\leq L_1\nrm{x-y}$ for all $x,y\in\R^d$.
The proofs of this section are deferred to \cref{app:smooth-proofs}.

The proximal-gradient mapping $G_{\gamma h}(x,\nabla f(x))$ is commonly
used to define $\epsilon$-stationarity for smooth composite objectives~\citep{nesterov2013composite,ghadimi2016minibatch}, as in
\cref{sec:preliminaries}.
The following lemma relates $\epsilon$-stationarity to the PGSP condition.

\begin{lemma}\label{lem:smooth-stationarity}
When $f$ is $L_1$-smooth, for every
$x\in\operatorname{dom}h$, $\gamma>0$, and $\delta\geq0$, it holds that
$\nrm{G_{\gamma h}(x,\nabla f(x))} \leq\min_{g\in\partial_\delta f(x)}\nrm{G_{\gamma h}(x,g)}+L_1\delta.$
\end{lemma}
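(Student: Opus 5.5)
The plan has two ingredients: the proximal-gradient mapping is $1$-Lipschitz in its gradient argument, and every element of the Goldstein subdifferential of an $L_1$-smooth function lies within $L_1\delta$ of $\nabla f(x)$. Combining them gives the claim at once.

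\emph{Step 1: nonexpansiveness of the mapping in $g$.} Fix $x\in\operatorname{dom}h$ and $\gamma>0$. Since $\operatorname{prox}_{\gamma h}$ is the proximal operator of a proper closed convex function, it is firmly nonexpansive, hence $1$-Lipschitz. Therefore, for any $g,g'\in\R^d$,
\begin{equation}
  \nrm{G_{\gamma h}(x,g)-G_{\gamma h}(x,g')}
  =\frac{1}{\gamma}\nrm{\operatorname{prox}_{\gamma h}(x-\gamma g')-\operatorname{prox}_{\gamma h}(x-\gamma g)}
  \leq\nrm{g-g'}.
\end{equation}
The $x$ terms in the definition \cref{eq:prox-gradient-map} cancel, and the factor $1/\gamma$ cancels against the $\gamma$ multiplying the gradients. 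This is why the final bound carries no $\gamma$ dependence.

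\emph{Step 2: localizing $\partial_\delta f(x)$.} Because $f$ is $L_1$-smooth, it is $C^1$, so $\partial f(y)=\set{\nabla f(y)}$ for every $y$. Hence
\begin{equation}
  \partial_\delta f(x)=\operatorname{conv}\set{\nabla f(y)\colon y\in\mathbb B(x,\delta)}.
\end{equation}
For each $y\in\mathbb B(x,\delta)$ we have $\nrm{\nabla f(y)-\nabla f(x)}\leq L_1\delta$. The set $\mathbb B(\nabla f(x),L_1\delta)$ is convex and contains every generator, so it contains their convex hull. Thus every $g\in\partial_\delta f(x)$ satisfies $\nrm{g-\nabla f(x)}\leq L_1\delta$.

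\emph{Step 3: conclusion.} The set $\partial_\delta f(x)$ is the convex hull of the image of a compact set under a continuous map, so it is compact. The map $g\mapsto\nrm{G_{\gamma h}(x,g)}$ is continuous by Step 1, so the minimum is attained at some $g^\star\in\partial_\delta f(x)$. By the triangle inequality and Steps 1 and 2,
\begin{equation}
  \nrm{G_{\gamma h}(x,\nabla f(x))}\leq\nrm{G_{\gamma h}(x,g^\star)}+\nrm{g^\star-\nabla f(x)}\leq\min_{g\in\partial_\delta f(x)}\nrm{G_{\gamma h}(x,g)}+L_1\delta,
\end{equation}
which is the claim.

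No step is a real obstacle. The only point needing care is Step 1, and specifically the observation that the $\gamma$ scalings cancel so the Lipschitz constant in $g$ is exactly $1$. The nonexpansiveness of $\operatorname{prox}_{\gamma h}$ itself is a standard fact for proper closed convex $h$.
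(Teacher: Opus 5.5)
Your proof is correct and follows the paper's argument. The paper likewise combines the $1$-Lipschitz dependence of $G_{\gamma h}(x,\cdot)$ on its gradient argument, which comes from nonexpansiveness of $\operatorname{prox}_{\gamma h}$ and is established in the proof of \cref{lem:shifted-conversion}, with the bound $\nrm{g-\nabla f(x)}\leq L_1\delta$ for $g\in\partial_\delta f(x)$, and then minimizes over $g$. Your compactness argument for attainment of the minimum is a harmless extra: the inequality holds for every $g\in\partial_\delta f(x)$, so it passes to the infimum directly.
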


Combining \cref{lem:smooth-stationarity,thm:first-order-complexity} with
$\delta=\epsilon/(2L_1)$ gives $O(\epsilon^{-4})$ stochastic gradient
query complexity for finding an $\epsilon$-stationary point of $\Phi$.

To obtain a sharper rate in the deterministic case $\sigma=0$, we
incorporate the idea of optimistic online mirror descent~\citep{chiang2012gradual,rakhlin2013predictable} into COMID (see \cref{app:proofs-comid} for the update and the proof).
The optimistic COMID learner uses the previous stochastic gradient as a hint.
To initialize the hint in each block, let $g_{k,0}$ be an independent
stochastic gradient queried at $x_k$.
Since the query points in each block lie in $\mathbb B(x_k,D)$,
smoothness and the variance bound give $\E\brk*{\nrm{g_{k,t}-g_{k,t-1}}^2}\leq12(L_1D+\sigma)^2$.
Using this bound in the regret analysis yields the following guarantee
without the second-moment bound in condition (iii) of \cref{ass:oracle}.

\begin{theorem}\label{thm:optimistic-comid-regret}
Suppose that $f$ is $L_1$-smooth and that conditions (i) and (ii) of \cref{ass:oracle} hold. Then, optimistic COMID achieves
$R_T\leq2\sqrt3\,(L_1D+\sigma)D\sqrt T$.
\end{theorem}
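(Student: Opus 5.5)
The plan is to run the standard optimistic online mirror descent analysis for composite losses over the feasible set $\calU_D(x_k)$, which has diameter at most $2D$, and then bound the resulting gradient-variation term using smoothness. Within block $k$, the learner is optimistic COMID with step size $\eta$ and hint $M_t=g_{k,t-1}$. It keeps an auxiliary iterate $\hat u_t$. It plays
\[
u_t=\argmin_{u\in\calU_D(x_k)}\set*{\eta\inpr{M_t,u}+\eta\bigl(h(x_k+u)-h(x_k)\bigr)+\tfrac12\nrm{u-\hat u_t}^2},
\]
and then updates
\[
\hat u_{t+1}=\argmin_{u\in\calU_D(x_k)}\set*{\eta\inpr{g_{k,t},u}+\eta\bigl(h(x_k+u)-h(x_k)\bigr)+\tfrac12\nrm{u-\hat u_t}^2}.
\]
The initialization is $\hat u_1=0$. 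Because $h$ is kept inside both proximal steps, the first-order optimality conditions contain a subgradient of $h(x_k+\cdot)$ at the respective minimizer (or a normal-cone term on the boundary, which $\calU_D(x_k)$ absorbs). Convexity of $h$ then lets the composite term $h(x_k+u_t)-h(x_k+u)$ be handled exactly as in the COMID proof of \cref{thm:comid-regret}.

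\textbf{Step 1: deterministic regret inequality.} For any fixed comparator $u\in\calU_D(x_k)$, I would establish
\[
\sum_{t=1}^T\ell_{k,t}(u_t)-\ell_{k,t}(u)\le \frac{\nrm{u-\hat u_1}^2}{2\eta}+\eta\sum_{t=1}^T\nrm{g_{k,t}-g_{k,t-1}}^2 .
\]
This follows from the usual three-point inequalities for the two proximal steps, one of them comparing $u_t$ with $\hat u_{t+1}$, together with Young's inequality on $\inpr{g_{k,t}-M_t,u_t-\hat u_{t+1}}$ and a telescoping sum. There is a small technicality: the regularizer is evaluated at $u_t$ in the loss, but it appears in both updates. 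I expect this to be resolved as in optimistic FTRL/OMD with composite terms. The $h$-term from the hint step is paired against the $h$-term from the update step evaluated at $\hat u_{t+1}$, and the $h(x_k+\hat u_{t+1})$ contributions cancel between the two optimality inequalities. I expect this cancellation to be the main obstacle, since a naive split would leave an extra $h$ term or double-count it. If the clean version fails, I would try putting $h$ only in the update step and bounding the leftover as a telescoping difference.

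\textbf{Step 2: expectation and variation bound.} Since $\nrm{u}\le D$ and $\hat u_1=0$, the first term is at most $D^2/(2\eta)$. The minimum over $u$ is attained by a fixed, data-dependent point, but the inequality holds pathwise for every $u$, so taking expectations directly is legitimate. For the variation term, write
\[
g_{k,t}-g_{k,t-1}=(g_{k,t}-\nabla f(y_{k,t}))+(\nabla f(y_{k,t})-\nabla f(y_{k,t-1}))+(\nabla f(y_{k,t-1})-g_{k,t-1}),
\]
with $y_{k,0}=x_k$. Applying $\nrm{a+b+c}^2\le3(\nrm a^2+\nrm b^2+\nrm c^2)$, smoothness with $\nrm{y_{k,t}-y_{k,t-1}}\le2D$, and condition (ii) of \cref{ass:oracle} gives
\[
\E\nrm{g_{k,t}-g_{k,t-1}}^2\le3(\sigma^2+4L_1^2D^2+\sigma^2)\le12(L_1D+\sigma)^2,
\]
which is the bound quoted before the theorem. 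Condition (iii) is never used.

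\textbf{Step 3: tuning.} Combining the two steps gives
\[
\E[\mathrm{Regret}]\le \frac{D^2}{2\eta}+12\eta(L_1D+\sigma)^2T .
\]
Choosing $\eta=D/\bigl(2\sqrt6\,(L_1D+\sigma)\sqrt T\bigr)$ yields $2\sqrt{6}\,(L_1D+\sigma)D\sqrt T$. To reach the stated constant $2\sqrt3$, I would sharpen Step 1 to have coefficient $\eta/2$ in front of the variation term, using the standard sharper Young split with the $-\tfrac{1}{2\eta}\nrm{u_t-\hat u_{t+1}}^2$ term. The bound then becomes $D^2/(2\eta)+6\eta(L_1D+\sigma)^2T$, which is optimized to $2\sqrt3\,(L_1D+\sigma)D\sqrt T$. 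If $\sigma=0$ the bound also holds, with $\eta$ chosen accordingly.
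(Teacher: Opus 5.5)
Your proposal is correct and follows the paper's proof: your $\hat u_t$ is the paper's $v_{k,t}$, and the paper adds and subtracts $\ell_{k,t}(v_{k,t+1})$, so the $h(x_k+\hat u_{t+1})$ terms cancel between the two optimality inequalities exactly as you anticipate. The paper uses your sharpened Young split from the outset: the $-\tfrac{1}{2\eta}\nrm{u_t-\hat u_{t+1}}^2$ term from the hint step absorbs the Young remainder, giving $D^2/(2\eta)+\tfrac{\eta}{2}\sum_t\nrm{g_{k,t}-g_{k,t-1}}^2$, followed by the same $12(L_1D+\sigma)^2$ variation bound and the choice $\eta=D/\bigl(2\sqrt{3T}(L_1D+\sigma)\bigr)$.
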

Note that optimistic COMID is not limited to the smooth setting: the guarantee of \cref{thm:first-order-complexity} also holds with this learner in the nonsmooth setting.

We now derive smooth counterparts of
\cref{thm:first-order-complexity,thm:zeroth-order-complexity} for first-order
and zeroth-order access.

\begin{theorem}\label{thm:smooth-complexity}
Suppose that $f$ is $L_1$-smooth and that the oracle satisfies
conditions (i) and (ii) of \cref{ass:oracle}, fix $\epsilon>0$, and let
$D=\epsilon/(12L_1)$,
$T=\max\set*{1,\ceil*{400\sigma^2/\epsilon^2}}$, and
$K=\max\set*{1,\ceil*{48L_1\Delta/\epsilon^2}}$.
Then \cref{alg:pgsp} with the optimistic COMID learner of \cref{thm:optimistic-comid-regret} and $\mathsf Q=\mathsf Q_{\mathrm{fo}}$ outputs a point $y_R$ that is an $\epsilon$-stationary point of $\Phi$
for every $\gamma>0$, that is,
$\E\brk*{\nrm{G_{\gamma h}(y_R,\nabla f(y_R))}}\leq\epsilon$,
using
$O\prn*{\max\set*{
  \frac{L_1\Delta}{\epsilon^2},\frac{\sigma^2}{\epsilon^2},
  \frac{L_1\Delta\sigma^2}{\epsilon^4}}}$ stochastic gradient queries.
\end{theorem}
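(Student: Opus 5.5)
The plan is to reuse the conversion chain from the proof of \cref{thm:first-order-complexity}, with two changes. The COMID regret bound is replaced by the optimistic bound of \cref{thm:optimistic-comid-regret}, and a final step turns the resulting PGSP guarantee into a bound on $\nrm{G_{\gamma h}(y_R,\nabla f(y_R))}$ via \cref{lem:smooth-stationarity}.

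\textbf{Step 1: PGSP bound at $y_R$.} Since $f$ is $L_1$-smooth, it is differentiable and locally Lipschitz, hence well-behaved. All query points of block $k$ lie in $\mathbb B(x_k,D)$, so $\widetilde g_k=T^{-1}\sum_t\nabla f(y_{k,t})\in\partial_D f(x_k)$. I would apply \cref{lem:shifted-conversion} with $x=x_R$, $g=\widetilde g_R$, $\hat g=\bar g_R$ and $\nu=2D$. Then, for every $\gamma>0$,
\[
\min_{v\in\partial_{2D} f(y_R)}\nrm{G_{\gamma h}(y_R,v)}\le \calC_D(x_R,\bar g_R)+\nrm{\bar g_R-\widetilde g_R}.
\]
\cref{lem:regret-gap-noise} needs only conditions (i) and (ii) of \cref{ass:oracle}, which are assumed here. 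Taking expectations and using $R_T\le 2\sqrt3(L_1D+\sigma)D\sqrt T$ bounds the expectation of the right-hand side by
\[
\frac{\Delta}{DK}+\frac{2\sqrt3(L_1D+\sigma)}{\sqrt T}+\frac{\sigma}{\sqrt T}.
\]

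\textbf{Step 2: transfer to $\epsilon$-stationarity.} Because $u_R^*\in\calU_D(x_R)$, we have $y_R\in\operatorname{dom}h$, so \cref{lem:smooth-stationarity} applies with $\delta=2D$:
\[
\nrm{G_{\gamma h}(y_R,\nabla f(y_R))}\le\min_{v\in\partial_{2D} f(y_R)}\nrm{G_{\gamma h}(y_R,v)}+2L_1D.
\]
Combining this with Step 1, the expected stationarity residual is at most
\[
\frac{\Delta}{DK}+\frac{2\sqrt3 L_1D}{\sqrt T}+\frac{(2\sqrt3+1)\sigma}{\sqrt T}+2L_1D.
\]

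\textbf{Step 3: parameter substitution.} With $D=\epsilon/(12L_1)$ the four terms are bounded as follows.
\begin{itemize}
\item $2L_1D=\epsilon/6$.
\item Since $K\ge48L_1\Delta/\epsilon^2$, we get $\Delta/(DK)=12L_1\Delta/(\epsilon K)\le\epsilon/4$.
\item Using only $T\ge1$, $2\sqrt3L_1D/\sqrt T\le\sqrt3\epsilon/6<0.29\epsilon$.
\item If $\sigma>0$, then $T\ge400\sigma^2/\epsilon^2$ gives $(2\sqrt3+1)\sigma/\sqrt T\le(2\sqrt3+1)\epsilon/20<0.23\epsilon$; if $\sigma=0$ this term vanishes.
\end{itemize}
The total is below $0.93\epsilon\le\epsilon$.

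\textbf{Step 4: query count.} Each block uses $T$ queries plus one hint query $g_{k,0}$, for $K(T+1)\le 2KT$ queries in total. From the choices of $K$ and $T$,
\[
KT=O\bigl((1+L_1\Delta/\epsilon^2)(1+\sigma^2/\epsilon^2)\bigr),
\]
which expands to the stated maximum of three terms; the additive constant is absorbed as usual.

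The argument is mostly bookkeeping. The point needing care is the consistency of radii: the Goldstein radius $2D$ from \cref{lem:shifted-conversion} must be exactly the $\delta$ used in \cref{lem:smooth-stationarity}, which is why the smoothness penalty is $2L_1D$. The constants in $D$, $T$ and $K$ must absorb this penalty together with the $L_1D$ part of the optimistic regret, which does not shrink with $T$. A related check is that \cref{thm:optimistic-comid-regret} holds under (i) and (ii) alone, so no $G$ appears anywhere in the bound.
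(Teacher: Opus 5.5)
Your proposal is correct and follows the paper's proof. Like the paper, you chain \cref{lem:shifted-conversion}, \cref{lem:regret-gap-noise} and \cref{lem:smooth-stationarity} with $\delta=2D$, plug in the optimistic COMID bound, and count $K(T+1)$ queries. The only difference is bookkeeping: the paper uses $2\sqrt3L_1D/\sqrt T\le4L_1D$ and $2\sqrt3+1\le5$ to get $\Delta/(DK)+6L_1D+5\sigma/\sqrt T\le\epsilon/4+\epsilon/2+\epsilon/4$, whereas you bound the four terms separately and land below $0.93\epsilon$.
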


In the deterministic case $\sigma=0$, \cref{thm:smooth-complexity} uses
$O(L_1\Delta/\epsilon^2)$ gradient queries to find an
$\epsilon$-stationary point of $\Phi$.
The complexity in \cref{thm:smooth-complexity} matches that of proximal
stochastic gradient methods~\citep{ghadimi2016minibatch}.
Its $\sigma^2\epsilon^{-4}$ dependence also matches the lower bound for
unbiased bounded-variance oracles, even when $h=0$~\citep{arjevani2023lower}.
\Cref{tab:smooth-comparison} in \cref{app:smooth-comparison} summarizes these comparisons.

\begin{theorem}\label{thm:smooth-zo-complexity}
Suppose that \cref{ass:zo-estimator} holds and that $f$ is $L_1$-smooth, fix $\epsilon>0$, and let
$\rho=\epsilon/(2L_1)$, $D=\epsilon/(24L_1)$,
$T=\max\set*{1,\ceil*{1600G_{\mathrm{zo}}^2/\epsilon^2}}$, and
$K=\ceil*{192L_1\Delta/\epsilon^2+48}$,
where $G_{\mathrm{zo}}=4(2\pi)^{1/4}L_0\sqrt d$ as in
\cref{lem:two-point-estimator}.
Then \cref{alg:pgsp} with the optimistic COMID learner of \cref{thm:optimistic-comid-regret} and $\mathsf Q=\mathsf Q_{\mathrm{zo}}$ outputs a point $y_R$ that is an $\epsilon$-stationary point of $\Phi$
for every $\gamma>0$, that is,
$\E\brk*{\nrm{G_{\gamma h}(y_R,\nabla f(y_R))}}\leq\epsilon$,
using
$O\prn*{\max\set*{
  \frac{L_1\Delta}{\epsilon^2},\frac{dL_0^2}{\epsilon^2},
  \frac{dL_0^2L_1\Delta}{\epsilon^4}}}$ function-value queries.
\end{theorem}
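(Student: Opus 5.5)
The plan is to run the smooth first-order argument of \cref{thm:smooth-complexity} on the smoothed objective $\Phi_\rho=f_\rho+h$ and then move the guarantee back to $f$ using smoothness. Since $f$ is $L_1$-smooth, $f_\rho$ is also $L_1$-smooth: $\nabla f_\rho(x)=\E_z[\nabla f(x+\rho z)]$ inherits the Lipschitz constant. Also $\nrm{\nabla f_\rho(x)-\nabla f(x)}\le L_1\rho$, because each $\nabla f(x+\rho z)$ lies within $L_1\rho$ of $\nabla f(x)$. \Cref{lem:two-point-estimator} shows that $\mathsf Q_{\mathrm{zo}}$ is an unbiased oracle for $\nabla f_\rho$ with variance at most $G_{\mathrm{zo}}^2$. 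Hence conditions (i) and (ii) of \cref{ass:oracle} hold for $f_\rho$ with $\sigma=G_{\mathrm{zo}}$, and \cref{thm:optimistic-comid-regret} gives $R_T\le 2\sqrt3(L_1D+G_{\mathrm{zo}})D\sqrt T$. The initial gap of $\Phi_\rho$ satisfies $\Delta_\rho\le\Delta+2\rho L_0$, since $\abs{f_\rho-f}\le\rho L_0$ and $f$ is $L_0$-Lipschitz by \cref{lem:two-point-estimator}.

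Next I apply \cref{lem:regret-gap-noise,lem:shifted-conversion} to $f_\rho$ with $\nu=2D$. This gives
\[
\E\Bigl[\min_{v\in\partial_{2D}f_\rho(y_R)}\nrm{G_{\gamma h}(y_R,v)}\Bigr]\le\frac{\Delta_\rho}{DK}+\frac{R_T}{DT}+\frac{G_{\mathrm{zo}}}{\sqrt T}.
\]
Since $f_\rho$ is smooth, $\partial_{2D}f_\rho(y_R)$ is contained in $\mathbb B(\nabla f_\rho(y_R),2L_1D)$. The prox map is nonexpansive, so $v\mapsto G_{\gamma h}(y,v)$ is $1$-Lipschitz; this is exactly the argument of \cref{lem:smooth-stationarity}. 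Using it once for the Goldstein radius and once for the smoothing bias gives
\[
\nrm{G_{\gamma h}(y_R,\nabla f(y_R))}\le\min_{v}\nrm{G_{\gamma h}(y_R,v)}+2L_1D+L_1\rho .
\]
Note that I deliberately avoid \cref{lem:smoothing-transfer} here: the smoothness route costs only $L_1\rho$ rather than enlarging the radius.

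Finally, I plug in the parameters. The bias terms contribute $2L_1D+L_1\rho=\epsilon/12+\epsilon/2$. The regret and noise terms are
\[
\frac{2\sqrt3(L_1D+G_{\mathrm{zo}})}{\sqrt T}+\frac{G_{\mathrm{zo}}}{\sqrt T}\le\frac{2\sqrt3\,\epsilon/24+(2\sqrt3+1)G_{\mathrm{zo}}}{\sqrt T}.
\]
With $\sqrt T\ge 40G_{\mathrm{zo}}/\epsilon$ and $T\ge1$, the $G_{\mathrm{zo}}$ part is at most $\epsilon(2\sqrt3+1)/40\approx0.112\epsilon$. The $L_1D$ part is at most $0.144\epsilon/\sqrt T$; when $T=1$ this should be absorbed, but with $T\ge1600$ it is tiny.

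For the gap term, $K\ge192L_1\Delta/\epsilon^2+48$ and $D=\epsilon/(24L_1)$ give $DK\ge 8\Delta/\epsilon+2\epsilon/L_1$. Then
\[
\frac{\Delta_\rho}{DK}\le\frac{\Delta+\epsilon L_0/L_1}{DK}.
\]
The $\Delta$ part is at most $\epsilon/8$. The $\rho L_0$ part is the main obstacle, since it is not controlled by the $+48$ term unless $L_0$ is compared to $\epsilon$. I expect to handle it by noting that $\Phi_\rho-\Phi$ is bounded more sharply under smoothness: $\abs{f_\rho-f}\le L_1\rho^2/2$, because $\E z=0$ kills the linear term. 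This gives $\Delta_\rho\le\Delta+L_1\rho^2=\Delta+\epsilon^2/(4L_1)$, and the extra term $\epsilon^2/(4L_1)$ divided by $DK\ge2\epsilon/L_1$ is at most $\epsilon/8$.

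Summing, the total is roughly $\epsilon/12+\epsilon/2+0.12\epsilon+\epsilon/8+\epsilon/8$. This slightly exceeds $\epsilon$, so I will tighten the accounting; for example, the $\Delta$ term is really $\Delta/(DK)\le\epsilon\Delta/(8\Delta+2\epsilon^2/L_1)$, and the two gap pieces jointly satisfy $(\Delta+\epsilon^2/(4L_1))/(DK)\le\epsilon/8$. This brings the total below $\epsilon$. The query count is $2KT$, which gives the stated maximum.
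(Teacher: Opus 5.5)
Your proposal is correct and follows the paper's proof essentially step for step. You run the smooth argument of \cref{thm:smooth-complexity} on the $L_1$-smooth $f_\rho$ with $\sigma=G_{\mathrm{zo}}$, use $\abs{f_\rho-f}\le L_1\rho^2/2$ so that $\Delta_\rho\le\Delta+\epsilon^2/(4L_1)$ and the $+48$ in $K$ gives $\Delta_\rho/(DK)\le\epsilon/8$, and pay $L_1\rho$ for the gradient bias via nonexpansiveness of the prox. With that joint gap bound your total is about $0.96\epsilon$ even when $T=1$ (using $2\sqrt3L_1D/\sqrt T\le2\sqrt3L_1D$), so your earlier worry that the sum ``slightly exceeds'' $\epsilon$ disappears and you need not rely on $T\ge1600$, which is not guaranteed since $\epsilon$ is arbitrary.

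One small correction: the optimistic learner also queries the hint $g_{k,0}$ at $x_k$ in every block. The count is therefore $2K(T+1)$ rather than $2KT$, which leaves the stated order unchanged.
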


The leading term in \cref{thm:smooth-zo-complexity} is
$O(dL_0^2L_1\Delta\epsilon^{-4})$ for finding an
$\epsilon$-stationary point of $\Phi$.
This has the same $d\epsilon^{-4}$ dependence as the bound for smooth
composite objectives in \citet[Corollary~8]{ghadimi2016minibatch}.

%% file: section/08_numerical_figures.tex
\begin{figure}[!t]
  \centering
  \includegraphics[width=0.4\linewidth]{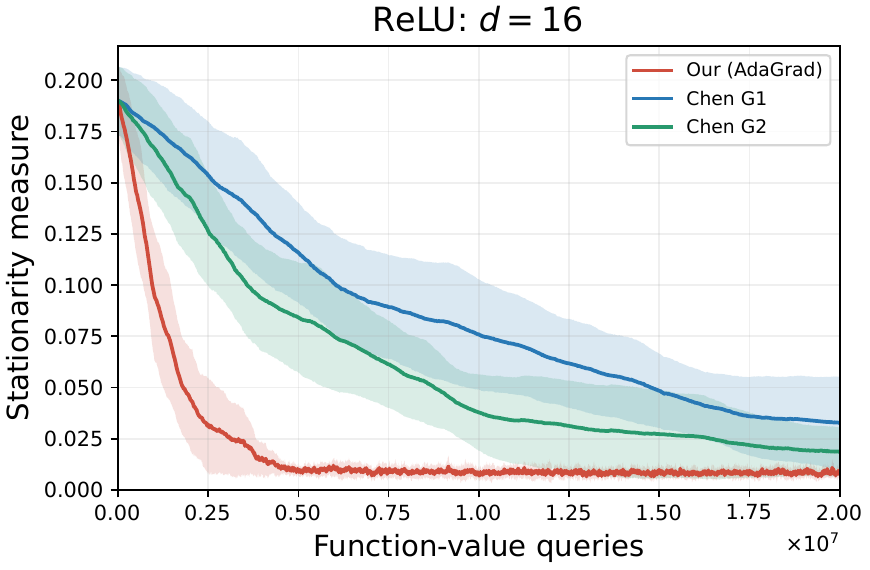}
  \includegraphics[width=0.4\linewidth]{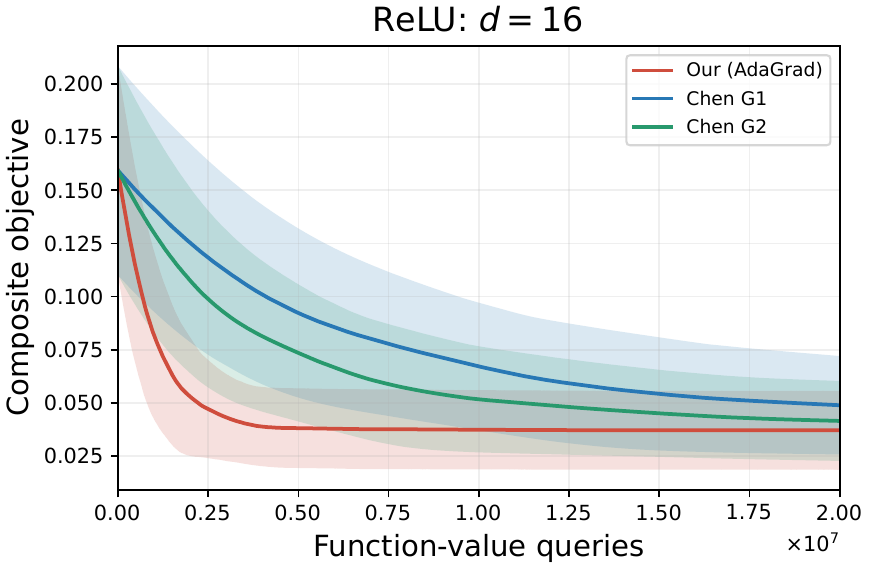}\\
  \includegraphics[width=0.4\linewidth]{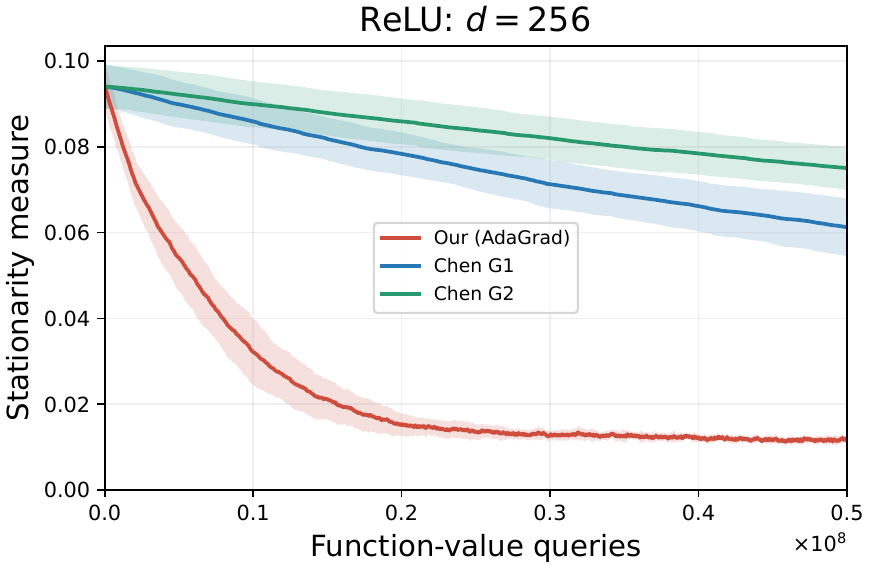}
  \includegraphics[width=0.4\linewidth]{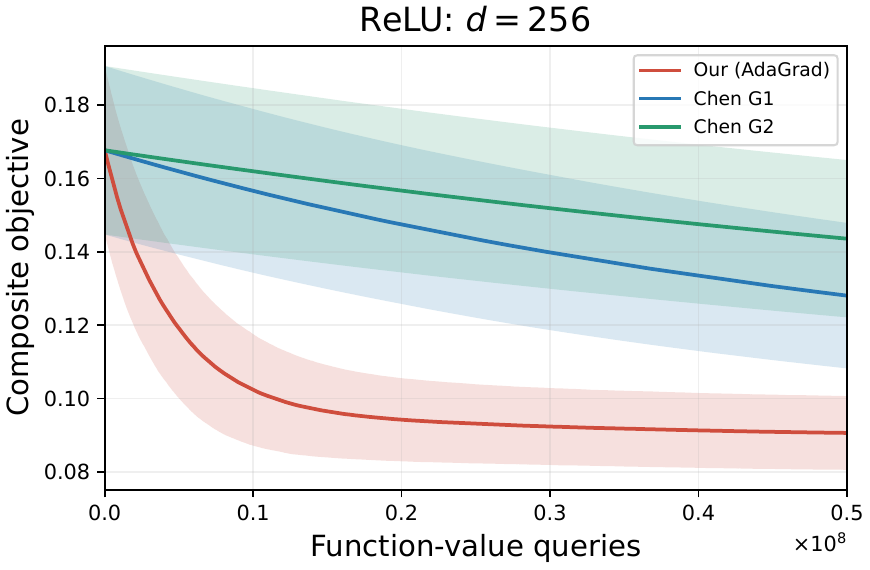}\\
  \includegraphics[width=0.4\linewidth]{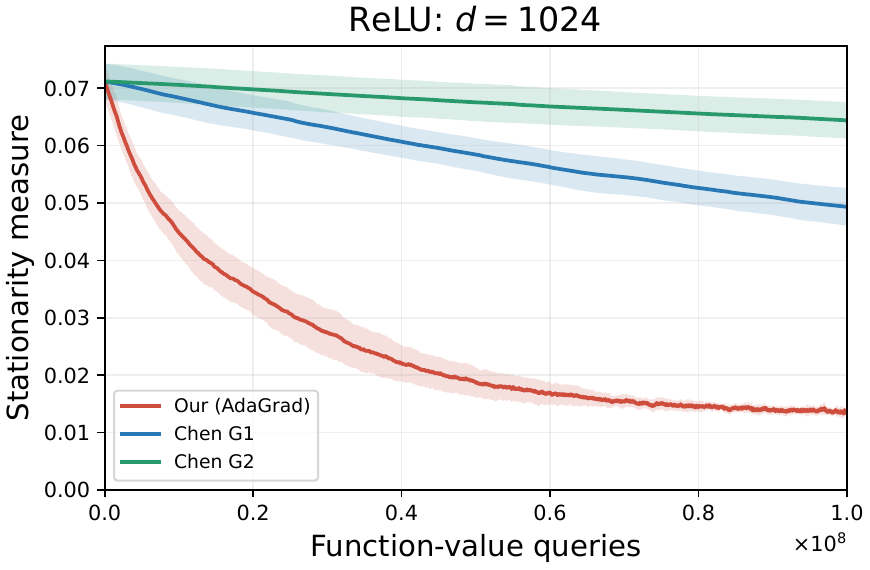}
  \includegraphics[width=0.4\linewidth]{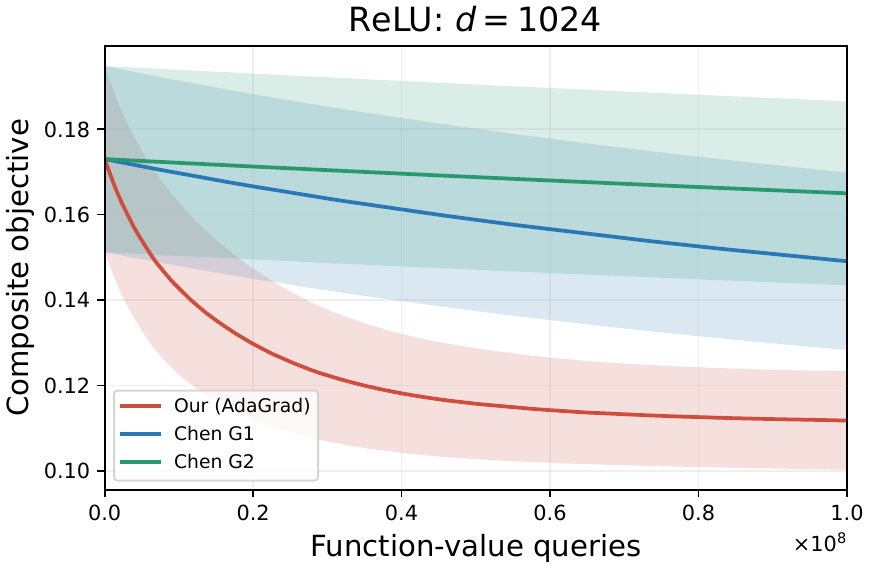}
  \caption{Stationarity measure $\min_{g\in\partial_\delta f(x)}\nrm{G_{\gamma h}(x,g)}$ (left) and objective values (right) for absolute loss for ReLU regression
  at $d=16$ (top), $d=256$ (middle), and $d=1024$ (bottom).
  Curves and shaded bands show means and one standard deviation over ten trials.}
  \label{fig:experiments}
\end{figure}

%% file: section/08_numerical_experiments.tex
\section{Numerical experiments}\label{sec:experiments}

To validate our theoretical results, we evaluate the query efficiency of \CompositeONC{} on nonsmooth nonconvex composite problems.
We conduct zeroth-order experiments on absolute loss for ReLU regression at $d\in\set{16,256,1024}$, with $\ell_1$ and $\ell_2$ regularization and box constraints.
We compare \CompositeONC{} with the minibatch (option G1) and variance-reduced (option G2) methods of \citet{chen2025}, using the same data and initialization for all methods within each trial.
The objective and all experimental parameters are in \cref{app:experimental-details}.

\Cref{fig:experiments} plots the stationarity measure and composite objective value against the number of function-value queries. We evaluate stationarity using a numerical upper bound on the measure with $\delta=0.01$ and $\gamma=1$.
Across all three dimensions, \CompositeONC{} outperforms the other methods in terms of both the stationarity measure and the composite objective value. Additional results for ramp loss are given in \cref{app:experimental-details}.

%% file: section/ai_use_statement.tex
\section*{AI use statement}
We used GPT-5.5, GPT-5.6-Sol, GPT-6.1-Sol, and Claude Opus 5.5 to assist with proof development and checking, literature searches, and editing.
We reviewed all AI-assisted work and take responsibility for the paper.

%% file: section/appendix.tex
\input{section/appendix_comid}
\input{section/appendix_zeroth_order}
\input{section/appendix_smooth}
\input{section/appendix_moreau_pgsp}
\input{section/appendix_experimental_details}

%% file: section/appendix_comid.tex
\section{Deferred proofs of the regret bounds for COMID}\label{app:proofs-comid}

We describe how composite objective mirror descent (COMID) and
optimistic COMID generate the directions $u_{k,t}$ using
\cref{eq:comid-loss-update,eq:optimistic-comid}, respectively, and prove
their regret bounds.

\begin{proof}[Proof of \cref{thm:comid-regret}]
Fix a block $k$ and apply \citet{duchi2010}, in their notation, with
\begin{align}
  \Omega=\calU_D(x_k),
  \qquad f_{k,t}(u)=\inpr{g_{k,t},u},
  \qquad r_k(u)=h(x_k+u)-\min_{v\in\calU_D(x_k)}h(x_k+v).
  \label{eq:comid-identification}
\end{align}
Initialize $u_{k,1}\in\argmin_{v\in\calU_D(x_k)}h(x_k+v)$, so that $r_k(u)\geq0$ for all $u$ and $r_k(u_{k,1})=0$.
Using the Euclidean Bregman divergence, the COMID update~\citep[Eq.~(3)]{duchi2010} specializes to
\begin{align}
  u_{k,t+1}\in\argmin_{u\in\calU_D(x_k)}
  \set*{
    \frac{1}{2}\nrm{u-u_{k,t}}^2
    +\eta\prn*{\inpr{g_{k,t},u} + h(x_k+u)-h(x_k)}
  },
  \label{eq:comid-update}
\end{align}
where $\eta>0$ is the step size.
Then, by the definition of $\ell_{k,t}$ in \cref{eq:new_loss},
\begin{align}
  u_{k,t+1}\in\argmin_{u\in\calU_D(x_k)}
  \set*{\frac{1}{2}\nrm{u-u_{k,t}}^2+\eta\ell_{k,t}(u)}.
  \label{eq:comid-loss-update}
\end{align}

For any $u\in\calU_D(x_k)$, Theorem~2 of \citet{duchi2010} now gives
\begin{align}
  \sum_{t=1}^T \ell_{k,t}(u_{k,t})
    -\sum_{t=1}^T \ell_{k,t}(u)
  &=
  \sum_{t=1}^T\prn*{f_{k,t}(u_{k,t})+r_k(u_{k,t})-f_{k,t}(u)-r_k(u)}\\
  &\leq
  \frac{\nrm{u-u_{k,1}}^2}{2\eta}+r_k(u_{k,1})
  +\frac{\eta}{2}\sum_{t=1}^T\nrm{g_{k,t}}^2.
  \label{eq:comid-regret-application}
\end{align}
Taking the minimum over $u$, then expectation, and using
$\nrm{u-u_{k,1}}\leq2D$ and the assumed moment bound gives
\begin{align}
  \E\brk*{\sum_{t=1}^T\ell_{k,t}(u_{k,t})
    -\min_{u\in\calU_D(x_k)}\sum_{t=1}^T\ell_{k,t}(u)}
  \leq\frac{2D^2}{\eta}+\frac{\eta TG^2}{2}.
\end{align}
Choosing $\eta=2D/(G\sqrt{T})$ makes the right-hand side
$2DG\sqrt{T}$, which completes the proof.
\end{proof}

\begin{proof}[Proof of \cref{thm:optimistic-comid-regret}]
Fix a block $k$ and set $r_k(u)\coloneqq h(x_k+u)-h(x_k)$.
Let $g_{k,0}$ be an independent stochastic gradient queried at $x_k$.
For a step size $\eta>0$, initialize $v_{k,1}=0$ and
\begin{align}
  u_{k,1}\in\argmin_{u\in\calU_D(x_k)}
  \set*{\frac12\nrm{u}^2+\eta\prn*{\inpr{g_{k,0},u}+r_k(u)}}.
\end{align}
We use $g_{k,t}$ as the hint vector for round $t+1$ and apply the COMID update
\cref{eq:comid-loss-update} in the proof of \cref{thm:comid-regret}
as follows
\begin{align}
  v_{k,t+1}
  &\in\argmin_{u\in\calU_D(x_k)}
    \set*{\frac12\nrm{u-v_{k,t}}^2+\eta\ell_{k,t}(u)},
    \label{eq:optimistic-comid-v}\\
  u_{k,t+1}
  &\in\argmin_{u\in\calU_D(x_k)}
    \set*{\frac12\nrm{u-v_{k,t+1}}^2
      +\eta\prn*{\inpr{g_{k,t},u}+r_k(u)}}.
  \label{eq:optimistic-comid}
\end{align}
For any $u\in\calU_D(x_k)$, we have
\begin{align}
  \ell_{k,t}(u_{k,t})-\ell_{k,t}(u)
  &=\ell_{k,t}(v_{k,t+1})-\ell_{k,t}(u)
    +\ell_{k,t}(u_{k,t})-\ell_{k,t}(v_{k,t+1})\\
  &=\ell_{k,t}(v_{k,t+1})-\ell_{k,t}(u)
    +\inpr{g_{k,t},u_{k,t}-v_{k,t+1}}+r_k(u_{k,t})-r_k(v_{k,t+1})\\
  &=\ell_{k,t}(v_{k,t+1})-\ell_{k,t}(u)
    +\inpr{g_{k,t-1},u_{k,t}-v_{k,t+1}}\\
  &\quad+r_k(u_{k,t})-r_k(v_{k,t+1})
    +\inpr{g_{k,t}-g_{k,t-1},u_{k,t}-v_{k,t+1}}.
  \label{eq:optimistic-loss-decomposition}
\end{align}
To bound this difference, we derive the optimality conditions.
For any $w\in\calU_D(x_k)$ and $0<\lambda\leq1$, let
$w_\lambda=v_{k,t+1}+\lambda(w-v_{k,t+1})\in\calU_D(x_k)$.
Then, we have
\begin{align}
  0
  &\leq\frac12\nrm{w_\lambda-v_{k,t}}^2
    -\frac12\nrm{v_{k,t+1}-v_{k,t}}^2
    +\eta\prn*{\ell_{k,t}(w_\lambda)-\ell_{k,t}(v_{k,t+1})}\\
  &=\lambda\inpr{v_{k,t+1}-v_{k,t},w-v_{k,t+1}}
    +\frac{\lambda^2}{2}\nrm{w-v_{k,t+1}}^2
    +\eta\prn*{\ell_{k,t}(w_\lambda)-\ell_{k,t}(v_{k,t+1})}\\
  &\leq\lambda\inpr{v_{k,t+1}-v_{k,t},w-v_{k,t+1}}
    +\frac{\lambda^2}{2}\nrm{w-v_{k,t+1}}^2
    +\eta\lambda\prn*{\ell_{k,t}(w)-\ell_{k,t}(v_{k,t+1})},
\end{align}
where the first inequality follows from \cref{eq:optimistic-comid-v}
and the last inequality follows from convexity of $\ell_{k,t}$, which gives
$\ell_{k,t}(w_\lambda)\leq(1-\lambda)\ell_{k,t}(v_{k,t+1})
+\lambda\ell_{k,t}(w)$.
Dividing by $\lambda$ and letting $\lambda\downarrow0$, we obtain
\begin{align}
  0
  &\leq\inpr{v_{k,t+1}-v_{k,t},w-v_{k,t+1}}
    +\eta\prn*{\ell_{k,t}(w)-\ell_{k,t}(v_{k,t+1})}.
    \label{eq:optimistic-v-optimality}
\end{align}
Similarly, applying this argument to \cref{eq:optimistic-comid} gives
\begin{align}
  0&\leq\inpr{u_{k,t}-v_{k,t},w-u_{k,t}}
    +\eta\prn*{\inpr{g_{k,t-1},w-u_{k,t}}+r_k(w)-r_k(u_{k,t})}.
    \label{eq:optimistic-u-optimality}
\end{align}
Thus, \cref{eq:optimistic-v-optimality,eq:optimistic-u-optimality} give
\begin{align}
  \ell_{k,t}(v_{k,t+1})-\ell_{k,t}(u)
  &\leq\frac{\inpr{v_{k,t}-v_{k,t+1},v_{k,t+1}-u}}{\eta}
  =\frac{\nrm{u-v_{k,t}}^2-\nrm{u-v_{k,t+1}}^2
    -\nrm{v_{k,t+1}-v_{k,t}}^2}{2\eta}
\end{align}
and
\begin{align}
  \inpr{g_{k,t-1},u_{k,t}-v_{k,t+1}}
    +r_k(u_{k,t})-r_k(v_{k,t+1})
  &\leq\frac{\inpr{v_{k,t}-u_{k,t},u_{k,t}-v_{k,t+1}}}{\eta}\\
  &=\frac{\nrm{v_{k,t+1}-v_{k,t}}^2
    -\nrm{u_{k,t}-v_{k,t+1}}^2-\nrm{u_{k,t}-v_{k,t}}^2}{2\eta}.
\end{align}
Substituting these bounds into \cref{eq:optimistic-loss-decomposition} gives
\begin{align}
  \ell_{k,t}(u_{k,t})-\ell_{k,t}(u)
  &\leq\frac{\nrm{u-v_{k,t}}^2-\nrm{u-v_{k,t+1}}^2
    -\nrm{v_{k,t+1}-v_{k,t}}^2}{2\eta}\\
  &\qquad+\frac{\nrm{v_{k,t+1}-v_{k,t}}^2
    -\nrm{u_{k,t}-v_{k,t+1}}^2-\nrm{u_{k,t}-v_{k,t}}^2}{2\eta}\\
  &\qquad  +\inpr{g_{k,t}-g_{k,t-1},u_{k,t}-v_{k,t+1}}\\
  &\leq\frac{\nrm{u-v_{k,t}}^2-\nrm{u-v_{k,t+1}}^2}{2\eta}
    -\frac{\nrm{u_{k,t}-v_{k,t+1}}^2+\nrm{u_{k,t}-v_{k,t}}^2}{2\eta}\\
  &\qquad+\frac{\eta}{2}\nrm{g_{k,t}-g_{k,t-1}}^2
    +\frac{\nrm{u_{k,t}-v_{k,t+1}}^2}{2\eta}\tag{by Young's inequality}\\
  &=\frac{\nrm{u-v_{k,t}}^2-\nrm{u-v_{k,t+1}}^2}{2\eta}
    -\frac{\nrm{u_{k,t}-v_{k,t}}^2}{2\eta}
    +\frac{\eta}{2}\nrm{g_{k,t}-g_{k,t-1}}^2\\
  &\leq\frac{\nrm{u-v_{k,t}}^2-\nrm{u-v_{k,t+1}}^2}{2\eta}
    +\frac{\eta}{2}\nrm{g_{k,t}-g_{k,t-1}}^2.
\end{align}
Summing over $t$ and using $\nrm{u}\leq D$ yields
\begin{align}
  \sum_{t=1}^T\ell_{k,t}(u_{k,t})-\sum_{t=1}^T\ell_{k,t}(u)
  \leq\frac{D^2}{2\eta}
    +\frac{\eta}{2}\sum_{t=1}^T\nrm{g_{k,t}-g_{k,t-1}}^2.
  \label{eq:optimistic-regret}
\end{align}

With the convention $y_{k,0}=x_k$, we have for all $t\geq1$
\begin{align}
  &\E\brk*{\nrm{g_{k,t}-g_{k,t-1}}^2}\\
  &=\E\brk*{\nrm{\prn*{g_{k,t}-\nabla f(y_{k,t})}
    +\prn*{\nabla f(y_{k,t})-\nabla f(y_{k,t-1})}
    +\prn*{\nabla f(y_{k,t-1})-g_{k,t-1}}}^2}\\
  &\leq3\E\brk*{\nrm{g_{k,t}-\nabla f(y_{k,t})}^2}
    +3\E\brk*{\nrm{\nabla f(y_{k,t})-\nabla f(y_{k,t-1})}^2}
    +3\E\brk*{\nrm{g_{k,t-1}-\nabla f(y_{k,t-1})}^2}\\
  &\leq6\sigma^2+3L_1^2\E\brk*{\nrm{y_{k,t}-y_{k,t-1}}^2}\\
  &\leq12L_1^2D^2+6\sigma^2
  \leq12(L_1D+\sigma)^2,
  \label{eq:prediction-error}
\end{align}
where the second inequality uses the $L_1$-smoothness of $f$ and condition (ii) of \cref{ass:oracle}.

This bound and \cref{eq:optimistic-regret} give
\begin{align}
  R_T
  &\leq\frac{D^2}{2\eta}
    +\frac{\eta}{2}\sum_{t=1}^T\E\brk*{\nrm{g_{k,t}-g_{k,t-1}}^2}\leq\frac{D^2}{2\eta}+6\eta T(L_1D+\sigma)^2.
\end{align}
Choosing $\eta=D/\prn{2\sqrt{3T}(L_1D+\sigma)}$ then gives
$R_T\leq2\sqrt3\,(L_1D+\sigma)D\sqrt T$.
\end{proof}

\section[Deferred proofs for Composite O2NC]{Deferred proofs for \cref{sec:algorithm}}\label{app:proofs-conversion}\label{app:proofs-zeroth-order}\label{app:proofs-preliminaries}

We prove \cref{lem:shifted-conversion,lem:two-point-estimator,thm:zeroth-order-complexity} in this section.

\begin{lemma}\label{lem:inclusion-to-proximal}
For $x\in\operatorname{dom}h$, $g\in\R^d$, $\gamma>0$, and every
$s\in\partial h(x)$,
$\nrm{G_{\gamma h}(x,g)}\leq \nrm{g+s}$.
\end{lemma}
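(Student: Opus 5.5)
The plan is to use the optimality characterization of the proximal operator together with firm nonexpansiveness (equivalently, monotonicity of $\partial h$). Write $p\coloneqq\operatorname{prox}_{\gamma h}(x-\gamma g)$, so that $G_{\gamma h}(x,g)=(x-p)/\gamma$. By the first-order optimality condition for the strongly convex problem in \cref{eq:prox-definition}, $p$ satisfies $\frac{1}{\gamma}(x-\gamma g-p)\in\partial h(p)$, that is, $G_{\gamma h}(x,g)-g\in\partial h(p)$. This uses only that $h$ is proper closed convex, which guarantees the sum rule for subdifferentials of $h$ plus a quadratic.

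Next, I will invoke monotonicity of $\partial h$ at the two points $x$ and $p$, with subgradients $s\in\partial h(x)$ and $G_{\gamma h}(x,g)-g\in\partial h(p)$:
\begin{align}
  \inpr*{s-\bigl(G_{\gamma h}(x,g)-g\bigr),\,x-p}\geq0 .
\end{align}
Substituting $x-p=\gamma G_{\gamma h}(x,g)$ and dividing by $\gamma>0$ gives $\inpr{s+g,G_{\gamma h}(x,g)}\geq\nrm{G_{\gamma h}(x,g)}^2$. Cauchy--Schwarz then yields $\nrm{G_{\gamma h}(x,g)}^2\leq\nrm{g+s}\,\nrm{G_{\gamma h}(x,g)}$, and dividing by $\nrm{G_{\gamma h}(x,g)}$ (the case where it is zero being trivial) gives the claim.

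There is no real obstacle here. The only point needing care is justifying the optimality condition $0\in\partial h(p)+\frac{1}{\gamma}(p-x+\gamma g)$ when $h$ may take the value $+\infty$. This follows from the subdifferential sum rule, since the quadratic term is finite and continuous everywhere. Alternatively, one can argue directly from the variational inequality defining the minimizer, by the same $\lambda\downarrow0$ argument used in the proof of \cref{thm:optimistic-comid-regret}, which yields $h(w)\geq h(p)+\frac1\gamma\inpr{x-\gamma g-p,w-p}$ for all $w$.
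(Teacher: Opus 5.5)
Your proof is correct and follows the paper's argument. The paper writes out the two subgradient inequalities, at $x$ with $s$ and at $\operatorname{prox}_{\gamma h}(x-\gamma g)$ with $G_{\gamma h}(x,g)-g$, and adds them; this is exactly the monotonicity inequality you invoke, and it is followed by the same substitution $x-p=\gamma G_{\gamma h}(x,g)$, Cauchy--Schwarz, and division.
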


\begin{proof}[Proof of \cref{lem:inclusion-to-proximal}]
By \cref{eq:prox-definition},
$\operatorname{prox}_{\gamma h}(x-\gamma g)$ minimizes
$h\prn*{v}+\frac{1}{2\gamma}\nrm{v-\prn*{x-\gamma g}}^2$ over
$v\in\R^d$.  Therefore, its first-order optimality condition is
\begin{align}
  0\in
  \partial h\prn*{\operatorname{prox}_{\gamma h}(x-\gamma g)}
  +\frac{1}{\gamma}\prn*{
    \operatorname{prox}_{\gamma h}(x-\gamma g)-x+\gamma g
  }.
\end{align}
By \cref{eq:prox-gradient-map}, the second term is
$-G_{\gamma h}(x,g)+g$.  Hence
\begin{align}
  G_{\gamma h}(x,g)-g
  \in
  \partial h\prn*{\operatorname{prox}_{\gamma h}(x-\gamma g)}.
\end{align}
Since $s\in\partial h(x)$, the two subgradient inequalities are
\begin{align}
  h\prn*{\operatorname{prox}_{\gamma h}(x-\gamma g)}
  &\geq h(x)+
  \inpr*{s,\operatorname{prox}_{\gamma h}(x-\gamma g)-x},\\
  h(x)
  &\geq h\prn*{\operatorname{prox}_{\gamma h}(x-\gamma g)}
  +\inpr*{
    G_{\gamma h}(x,g)-g,
    x-\operatorname{prox}_{\gamma h}(x-\gamma g)
  }.
\end{align}
Adding them gives
\begin{align}
  0
  &\leq
  \inpr*{
    G_{\gamma h}(x,g)-g-s,
    \operatorname{prox}_{\gamma h}(x-\gamma g)-x
  }\\
  &=-\gamma\inpr*{
    G_{\gamma h}(x,g)-g-s,
    G_{\gamma h}(x,g)
  },
\end{align}
where the equality follows from \cref{eq:prox-gradient-map}.  Therefore,
\begin{align}
  \nrm{G_{\gamma h}(x,g)}^2
  &\leq
  \inpr*{g+s,G_{\gamma h}(x,g)}
  \leq
  \nrm{g+s}\nrm{G_{\gamma h}(x,g)}.
\end{align}
If $\nrm{G_{\gamma h}(x,g)}>0$, division by this norm completes the proof.
Otherwise, the claim is immediate.
\end{proof}

\begin{lemma}\label{lem:shifted-local-minimizer}
Fix $x\in\operatorname{dom}h$, $g\in\R^d$, and $D>0$.  Let $u^*$ be a
minimizer of $\inpr{g,u}+h(x+u)-h(x)$ over $u\in\calU_D(x)$, and let
$y=x+u^*$.
Then, for any $\gamma>0$,
$\nrm{G_{\gamma h}(y,g)}\leq\calC_D(x,g)$.
\end{lemma}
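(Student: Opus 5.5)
Our plan is to derive an element of $\partial h(y)$ from the optimality of $u^*$ and then invoke \cref{lem:inclusion-to-proximal} at the point $y$. The constrained problem defining $u^*$ minimizes the convex function $\psi(u)\coloneqq\inpr{g,u}+h(x+u)+\iota_{\mathbb B(0,D)}(u)$. Its optimality condition reads $0\in g+\partial h(x+u^*)+N_{\mathbb B(0,D)}(u^*)$, which holds under a mild qualification, e.g.\ $0\in\operatorname{dom}h(x+\cdot)\cap\operatorname{int}\mathbb B(0,D)$, true since $x\in\operatorname{dom}h$. So there are $s\in\partial h(y)$ and $\lambda\ge0$ with $g+s+\lambda u^*=0$, where $\lambda>0$ only if $\nrm{u^*}=D$. \Cref{lem:inclusion-to-proximal} then gives $\nrm{G_{\gamma h}(y,g)}\le\nrm{g+s}=\lambda\nrm{u^*}$ for every $\gamma>0$. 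If $\lambda=0$ we are done, since $\calC_D\ge0$. Otherwise $\nrm{u^*}=D$, and it remains to show $\lambda D\le\calC_D(x,g)$.

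For that bound, apply the subgradient inequality of $s$ at $y$ with the test point $x$ (which lies in $\operatorname{dom}h$):
\[
h(x)\ge h(y)+\inpr{s,x-y}=h(y)-\inpr{s,u^*}=h(y)+\inpr{g+\lambda u^*,u^*}.
\]
This yields $\lambda D^2\le h(x)-h(y)-\inpr{g,u^*}=D\,\calC_D(x,g)$, using the definition \cref{eq:local-gap} and the fact that $u^*$ is the minimizer. Dividing by $D$ gives $\lambda D\le\calC_D(x,g)$, which is the claim.

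The main obstacle is justifying the subdifferential sum rule, since $h$ may be extended-valued and the ball constraint is an extra indicator. We use that $\iota_{\mathbb B(0,D)}$ is continuous at $0$ and $0\in\operatorname{dom}h(x+\cdot)$, so the Moreau--Rockafellar rule applies. The normal cone to the ball at $u^*$ is $\{\lambda u^*:\lambda\ge0\}$ when $\nrm{u^*}=D$ and $\{0\}$ otherwise.

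As an alternative that avoids the normal cone, we could compare $\psi(u^*)$ with $\psi(tu^*)$ for $t\in[0,1]$ and use convexity along the segment. However, the sum-rule route is cleaner, and we would follow it.
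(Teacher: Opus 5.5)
Your proof is correct and follows the paper's route. The KKT condition for the ball-constrained problem gives $s\in\partial h(y)$ with $g+s=-\lambda u^*$, the subgradient inequality at $y$ tested at $x$ bounds $\lambda\nrm{u^*}$ by $\calC_D(x,g)$, and \cref{lem:inclusion-to-proximal} finishes the argument. The only differences are cosmetic: you split into the cases $\lambda=0$ and $\lambda>0$ where the paper uses complementary slackness in a single chain, and you explicitly justify the sum rule via Moreau--Rockafellar, which the paper leaves implicit.
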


\begin{proof}[Proof of \cref{lem:shifted-local-minimizer}]
The Lagrangian for $\nrm{u}^2/2\leq D^2/2$ gives some $\lambda\geq0$ such that
\begin{align}
  0&\in g+\partial h(x+u^*)+\lambda u^*,
  \qquad \lambda\prn*{\nrm{u^*}^2-D^2}=0.
\end{align}
Since $y=x+u^*$, there exists $s^*\in\partial h(y)$ such that
\begin{align}
  g+s^*+\lambda(y-x)&=0,
  \qquad \lambda\prn*{\nrm{y-x}-D}=0.
\end{align}
These conditions imply
\begin{align}
  \inpr{g+s^*,y-x}
  =-\lambda\nrm{y-x}^2
  =-D\lambda\nrm{y-x}
  =-D\nrm{g+s^*}. \label{eq:KKT_1}
\end{align}
The subgradient inequality
$h(y)-h(x)\leq\inpr{s^*,y-x}$ and $u^*=y-x$ give
\begin{align}
  \calC_D(x,g)
  &=-\frac{1}{D}\prn*{\inpr{g,u^*}+h(y)-h(x)}\\
  &\geq-\frac{1}{D}\inpr{g+s^*,y-x}\\
  &=\nrm{g+s^*} \tag{by \cref{eq:KKT_1}}\\
  &\geq\nrm{G_{\gamma h}(y,g)},
\end{align}
where the last inequality follows from \cref{lem:inclusion-to-proximal}.
\end{proof}

\begin{proof}[Proof of \cref{lem:shifted-conversion}]
Since $\nrm{y-x}\leq D$ and $\nu\geq2D$,
$\mathbb{B}(x,D)
  \subseteq\mathbb{B}(y,2D)
  \subseteq\mathbb{B}(y,\nu)$ and 
$g\in\partial_D f(x)
  \subseteq\partial_\nu f(y)$.
Then, by \cref{eq:prox-gradient-map},
\begin{align}
  \nrm{G_{\gamma h}(y,g)-G_{\gamma h}(y,\hat g)}
  &=\frac{1}{\gamma}\nrm*{
    \operatorname{prox}_{\gamma h}(y-\gamma\hat g)
    -\operatorname{prox}_{\gamma h}(y-\gamma g)}\\
  &\leq\frac{1}{\gamma}\nrm*{
    (y-\gamma\hat g)-(y-\gamma g)}\\
  &=\nrm{g-\hat g}
  \label{eq:pgmap-vector-lipschitz}
\end{align}
where the inequality follows from the nonexpansiveness of
$\operatorname{prox}_{\gamma h}$.

Therefore, we have
\begin{align}
  \min_{v\in\partial_\nu f(y)}
  \nrm{G_{\gamma h}(y,v)}
  &\leq\nrm{G_{\gamma h}(y,g)}\\
  &\leq\nrm{G_{\gamma h}(y,\hat g)}
  +\nrm{g-\hat g} \tag{by \cref{eq:pgmap-vector-lipschitz}}\\
  &\leq\calC_D(x,\hat g)+\nrm{g-\hat g} \tag{by \cref{lem:shifted-local-minimizer}}.
\end{align}
\end{proof}

%% file: section/appendix_zeroth_order.tex
\begin{proof}[Proof of \cref{lem:two-point-estimator}]
By Jensen's and the Cauchy--Schwarz inequalities and \cref{ass:zo-estimator}, for all $x,y$, we have
\begin{align}
  \abs{f(x)-f(y)}
  &\leq \E_\xi\brk*{\abs{F(x,\xi)-F(y,\xi)}}
  \leq \E_\xi\brk*{L(\xi)}\nrm{x-y}
  \leq L_0\nrm{x-y}.
\end{align}
The claims in \cref{eq:two-point-moment1,eq:two-point-moment2} follow directly
from \citet[Lemma~7]{kornowski2024zo}.

\end{proof}

\begin{proof}[Proof of \cref{thm:zeroth-order-complexity}]
We use the proof of \cref{thm:first-order-complexity} for $f_\rho+h$ with target
Goldstein radius $2D$, then use \cref{lem:smoothing-transfer}.
By \cref{lem:two-point-estimator}, $f$ is $L_0$-Lipschitz and
$\abs{f_\rho(x)-f(x)}\leq\rho L_0$ for every $x$.  Thus
\begin{align}
  \Delta_\rho\leq\Delta+2\rho L_0\leq\Delta+\delta L_0.
  \label{eq:smoothed-gap-bound}
\end{align}
{With $G=\sigma=G_{\mathrm{zo}}$ from \cref{lem:two-point-estimator}, the choices in \cref{thm:zeroth-order-complexity} satisfy}
\begin{align}
  T\geq\frac{4(2G_{\mathrm{zo}}+G_{\mathrm{zo}})^2}{\epsilon^2},
  \qquad
  K\geq\frac{2\Delta_\rho}{D\epsilon}.
\end{align}
Consequently, the proof of \cref{thm:first-order-complexity} gives
\begin{align}
  \E\brk*{\min_{g\in\partial_{2D}f_\rho(y_R)}
    \nrm{G_{\gamma h}(y_R,g)}}\leq\epsilon
\end{align}
for every $\gamma>0$.
Since $\rho+2D=\delta$, \cref{lem:smoothing-transfer} implies
$\partial_{2D}f_\rho(y_R)\subseteq\partial_\delta f(y_R)$, proving
the PGSP guarantee.

The resulting number of function-value queries is
{
\begin{align}
  N=2KT=O\prn*{\frac{dL_0^2(\Delta+\delta L_0)}{\delta\epsilon^3}}.
\end{align}
}
\end{proof}

%% file: section/appendix_smooth.tex
\section[Comparison and deferred proofs for the smooth setting]{Comparison and deferred proofs for \cref{sec:smooth}}\label{app:proofs-smooth}

\subsection{Comparison with existing bounds}\label{app:smooth-comparison}
\Cref{tab:smooth-comparison} compares \cref{thm:smooth-complexity,thm:smooth-zo-complexity} with existing oracle complexity bounds for finding an $\epsilon$-stationary point of $\Phi=f+h$ with $L_1$-smooth $f$, that is, a point $x$ with $\E\brk*{\nrm{G_{\gamma h}(x,\nabla f(x))}}\leq\epsilon$.
Bounds stated for $\E\brk*{\nrm{G_{\gamma h}(x,\nabla f(x))}^2}\leq\epsilon^2$ in the original papers imply this condition by Jensen's inequality.
The existing bounds hold for the step size $\gamma$ used by the respective method, whereas our bounds hold for every $\gamma>0$.
Mean-squared smoothness refers to the condition $\E_\xi\brk*{\nrm{\nabla F(x,\xi)-\nabla F(y,\xi)}^2}\leq L^2\nrm{x-y}^2$ for all $x,y\in\R^d$, which is stronger than the smoothness of $f$.

\begin{table}[t]
  \centering
  \small
  \caption{Oracle complexity for finding an $\epsilon$-stationary point of $\Phi=f+h$ with $L_1$-smooth $f$.
  The bounds show the dependence on $d$ and $\epsilon$.}
  \label{tab:smooth-comparison}
  \begin{tabular}{>{\raggedright\arraybackslash\leavevmode}p{5.4cm}ll>{\raggedright\arraybackslash}p{3.7cm}l}
    \toprule
    Reference & Oracle & Access & Assumption on the oracle & Complexity \\
    \midrule
    \citet{nesterov2013composite,ghadimi2016minibatch} & Deterministic & First-order & -- & $O(\epsilon^{-2})$ \\
    \citet{ghadimi2016minibatch} & Stochastic & First-order & Bounded variance & $O(\epsilon^{-4})$ \\
    \citet{wang2019spiderboost,pham2020proxsarah,trandinh2022hybrid} & Stochastic & First-order & Bounded variance, mean-squared smoothness & $O(\epsilon^{-3})$ \\
    \citet{ghadimi2016minibatch} & Stochastic & Zeroth-order & Each $F(\cdot,\xi)$ is smooth & $O(d\epsilon^{-4})$ \\
    \midrule
    \textbf{This work} (\cref{thm:smooth-complexity}) & Deterministic & First-order & -- & $O(\epsilon^{-2})$ \\
    \textbf{This work} (\cref{thm:smooth-complexity}) & Stochastic & First-order & Bounded variance & $O(\epsilon^{-4})$ \\
    \textbf{This work} (\cref{thm:smooth-zo-complexity}) & Stochastic & Zeroth-order & Each $F(\cdot,\xi)$ is Lipschitz & $O(d\epsilon^{-4})$ \\
    \bottomrule
  \end{tabular}
\end{table}

\subsection{Deferred proofs}\label{app:smooth-proofs}
We prove \cref{lem:smooth-stationarity} and the oracle complexity results from \cref{sec:smooth}.

\begin{proof}[Proof of \cref{lem:smooth-stationarity}]
For every $g\in\partial_\delta f(x)$, \cref{eq:pgmap-vector-lipschitz}
and the $L_1$-smoothness of $f$ give
\begin{align}
  \nrm{G_{\gamma h}(x,\nabla f(x))}
  \leq\nrm{G_{\gamma h}(x,g)}+\nrm{\nabla f(x)-g}
  \leq\nrm{G_{\gamma h}(x,g)}+L_1\delta.
\end{align}
Taking the minimum over $g\in\partial_\delta f(x)$ completes the proof.
\end{proof}

\begin{proof}[Proof of \cref{thm:smooth-complexity}]
Combining \cref{lem:shifted-conversion,lem:regret-gap-noise},
\cref{lem:smooth-stationarity} with $\delta=2D$, for every $\gamma>0$,
\begin{align}
  \E\brk*{\nrm{G_{\gamma h}(y_R,\nabla f(y_R))}}
  &\leq\frac{\Delta}{DK}+\frac{R_T}{DT}+\frac{\sigma}{\sqrt T}+2L_1D\\
  &\leq\frac{\Delta}{DK}+2L_1D
    +\frac{2\sqrt3\,(L_1D+\sigma)+\sigma}{\sqrt T} \tag{by \cref{thm:optimistic-comid-regret}}\\
  &\leq\frac{\Delta}{DK}+6L_1D+\frac{5\sigma}{\sqrt T}.
\end{align}
By the parameter choices in \cref{thm:smooth-complexity}, $6L_1D=\epsilon/2$,
$\Delta/(DK)\leq\epsilon/4$, and $5\sigma/\sqrt T\leq\epsilon/4$ gives 
$\E\brk*{\nrm{G_{\gamma h}(y_R,\nabla f(y_R))}} \leq \epsilon$.  
The number of stochastic gradient queries is
\begin{align}
  N=K(T+1)=O\prn*{\max\set*{
    \frac{L_1\Delta}{\epsilon^2},\frac{\sigma^2}{\epsilon^2},
    \frac{L_1\Delta\sigma^2}{\epsilon^4}}}.
\end{align}
\end{proof}

\begin{proof}[Proof of \cref{thm:smooth-zo-complexity}]
By $\nabla f_\rho(x)
=\E_{z\sim\unif(\mathbb B(0,1))}\brk*{\nabla f(x+\rho z)}$, we have
\begin{align}
  \nrm{\nabla f_\rho(x)-\nabla f_\rho(y)}
  &\leq\E_z\brk*{\nrm{\nabla f(x+\rho z)-\nabla f(y+\rho z)}}
    \leq L_1\nrm{x-y},\\
  \nrm{\nabla f_\rho(x)-\nabla f(x)}
  &\leq\E_z\brk*{\nrm{\nabla f(x+\rho z)-\nabla f(x)}}
    \leq L_1\rho,
\end{align}
where both bounds use the $L_1$-smoothness of $f$ and the second uses
$\nrm{z}\leq1$.  Since $\E_z\brk*{z}=0$, the quadratic Taylor bound gives
\begin{align}
  \abs{f_\rho(x)-f(x)}
  &=\abs*{\E_z\brk*{f(x+\rho z)-f(x)
    -\rho\inpr{\nabla f(x),z}}}
  \leq\frac{L_1\rho^2}{2}\E_z\brk*{\nrm{z}^2}
  \leq\frac{L_1\rho^2}{2}.
\end{align}
Thus the initial gap satisfies
\begin{align}
  \Delta_\rho\leq\Delta+L_1\rho^2
  =\Delta+\frac{\epsilon^2}{4L_1}.
\end{align}
{With $\sigma=G_{\mathrm{zo}}$ from \cref{lem:two-point-estimator}, the choices in \cref{thm:smooth-zo-complexity} satisfy}
{
\begin{align}
  T\geq\frac{1600G_{\mathrm{zo}}^2}{\epsilon^2},
  \qquad
  K\geq\frac{192L_1\Delta}{\epsilon^2}+48
  \geq\frac{192L_1\Delta_\rho}{\epsilon^2}.
\end{align}
}
{Consequently, the proof of \cref{thm:smooth-complexity} gives, for every $\gamma>0$,}
\begin{align}
  \E\brk*{\nrm{G_{\gamma h}(y_R,\nabla f_\rho(y_R))}}
  &\leq\frac{\Delta_\rho}{DK}+6L_1D
    +\frac{5G_{\mathrm{zo}}}{\sqrt T}\\
  &\leq\frac{\epsilon}{8}+\frac{\epsilon}{4}
    +\frac{\epsilon}{8}=\frac{\epsilon}{2}.
\end{align}
Using \cref{eq:pgmap-vector-lipschitz} and the gradient bias bound above,
we obtain
\begin{align}
  \E\brk*{\nrm{G_{\gamma h}(y_R,\nabla f(y_R))}}
  &\leq\E\brk*{\nrm{G_{\gamma h}(y_R,\nabla f_\rho(y_R))}}
    +\E\brk*{\nrm{\nabla f_\rho(y_R)-\nabla f(y_R)}}\\
  &\leq\frac{\epsilon}{2}+L_1\rho=\epsilon.
\end{align}
Each block uses $2(T+1)$ function-value queries, so
\begin{align}
  N=2K(T+1)
  &\leq2\prn*{49+\frac{192L_1\Delta}{\epsilon^2}}
    \prn*{2+\frac{1600G_{\mathrm{zo}}^2}{\epsilon^2}}\\
  &=O\prn*{\max\set*{
    \frac{L_1\Delta}{\epsilon^2},\frac{dL_0^2}{\epsilon^2},
    \frac{dL_0^2L_1\Delta}{\epsilon^4}}},
\end{align}
where the last equality uses
$G_{\mathrm{zo}}^2=16\sqrt{2\pi}\,dL_0^2$.
This completes the proof.
\end{proof}

%% file: section/appendix_moreau_pgsp.tex
\begingroup
\section{Relation between Moreau-envelope stationarity and PGSP}\label{app:moreau-pgsp}

Let $L_\delta>0$ be a Lipschitz constant of $\nabla f_\delta$. For $0<\gamma<1/L_\delta$, the Moreau envelope of $\Phi_\delta=f_\delta+h$ is
\begin{align}
  e_\gamma\Phi_\delta(x)
  \coloneqq\min_{z\in\R^d}\set*{\Phi_\delta(z)+\frac{\nrm{z-x}^2}{2\gamma}}.
\end{align}
Following \citet[Definition~3.1]{pougkakiotis2026inexact}, a point $x\in\operatorname{dom}h$ is an $\epsilon$-stationary point of $\Phi_\delta$ in the Moreau-envelope sense if $\nrm{\nabla e_\gamma\Phi_\delta(x)}\leq\epsilon$.

\begin{proposition}\label{prop:moreau-pgsp}
For $0<\gamma<1/L_\delta$ and $x\in\operatorname{dom}h$,
\begin{align}
  (1-\gamma L_\delta)\nrm{\nabla e_\gamma\Phi_\delta(x)}
  &\leq\nrm{G_{\gamma h}(x,\nabla f_\delta(x))}
  \leq(1+\gamma L_\delta)\nrm{\nabla e_\gamma\Phi_\delta(x)},
  \label{eq:moreau-prox-comparison}\\
  \min_{g\in\partial_\delta f(x)}\nrm{G_{\gamma h}(x,g)}
  &\leq(1+\gamma L_\delta)\nrm{\nabla e_\gamma\Phi_\delta(x)}.
  \label{eq:moreau-pgsp-transfer}
\end{align}
Thus, every $\epsilon$-stationary point of $\Phi_\delta$ in the Moreau-envelope sense is a $(\gamma,\delta,(1+\gamma L_\delta)\epsilon)$-PGSP of the original objective $\Phi=f+h$.
\end{proposition}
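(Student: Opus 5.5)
The plan is to go through the proximal point $p\coloneqq\operatorname{prox}_{\gamma\Phi_\delta}(x)$ that defines the Moreau envelope, and compare it with the proximal-gradient point $q\coloneqq\operatorname{prox}_{\gamma h}(x-\gamma\nabla f_\delta(x))$. Since $f_\delta$ is $L_\delta$-smooth and $\gamma<1/L_\delta$, the function $z\mapsto\Phi_\delta(z)+\nrm{z-x}^2/(2\gamma)$ is $(1/\gamma-L_\delta)$-strongly convex. Hence $p$ is unique. The standard Moreau-envelope calculus for weakly convex functions then gives
\begin{align}
\nabla e_\gamma\Phi_\delta(x)=(x-p)/\gamma.
\end{align}
I would state this as the known fact, citing it as in \citet{pougkakiotis2026inexact}, rather than rederive it. By \cref{eq:prox-gradient-map}, $G_{\gamma h}(x,\nabla f_\delta(x))=(x-q)/\gamma$. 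Both sides of \cref{eq:moreau-prox-comparison} therefore reduce to bounding $\nrm{p-q}$ by $\gamma L_\delta\nrm{x-p}$, or by a multiple of $\nrm{x-q}$.

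Next I characterize both points through fixed-point relations. The optimality condition for $p$ is $0\in\nabla f_\delta(p)+\partial h(p)+(p-x)/\gamma$. This is equivalent to
\begin{align}
p=\operatorname{prox}_{\gamma h}(x-\gamma\nabla f_\delta(p)).
\end{align}
The proximal-gradient point satisfies $q=\operatorname{prox}_{\gamma h}(x-\gamma\nabla f_\delta(x))$. Nonexpansiveness of $\operatorname{prox}_{\gamma h}$, as used in \cref{eq:pgmap-vector-lipschitz}, together with $L_\delta$-smoothness gives
\begin{align}
\nrm{p-q}\leq\gamma\nrm{\nabla f_\delta(p)-\nabla f_\delta(x)}\leq\gamma L_\delta\nrm{p-x}.
\end{align}

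The upper bound then follows by the triangle inequality: $\nrm{x-q}\leq\nrm{x-p}+\nrm{p-q}\leq(1+\gamma L_\delta)\nrm{x-p}$. Dividing by $\gamma$ gives $\nrm{G_{\gamma h}(x,\nabla f_\delta(x))}\leq(1+\gamma L_\delta)\nrm{\nabla e_\gamma\Phi_\delta(x)}$.

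For the lower bound, the triangle inequality the other way gives $\nrm{x-p}\leq\nrm{x-q}+\gamma L_\delta\nrm{x-p}$. Rearranging yields $(1-\gamma L_\delta)\nrm{x-p}\leq\nrm{x-q}$, which is the left inequality. This is where $\gamma<1/L_\delta$ is used beyond existence and uniqueness of $p$.

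For \cref{eq:moreau-pgsp-transfer} it suffices to show $\nabla f_\delta(x)\in\partial_\delta f(x)$; the minimum is then at most the value at this $g$, and the upper bound applies. By \cref{lem:smoothing-transfer} with $\nu=0$, $\partial_0 f_\delta(x)\subseteq\partial_\delta f(x)$. Since $f_\delta$ is continuously differentiable, with Lipschitz gradient, its Clarke subdifferential is $\set{\nabla f_\delta(x)}$, so $\partial_0 f_\delta(x)=\set{\nabla f_\delta(x)}$. The final sentence of the proposition then follows immediately from \cref{def:pgsp}.

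The main obstacle is justifying the gradient formula for $e_\gamma\Phi_\delta$ when $\Phi_\delta$ is only weakly convex and $h$ may take the value $+\infty$. I would handle it by writing
\begin{align}
e_\gamma\Phi_\delta(x)=\min_z\set*{\psi(z)+\tfrac{\lambda}{2}\nrm{z-x}^2}-\tfrac{\lambda}{2}\nrm{x}^2+\ldots,
\end{align}
where $\psi\coloneqq\Phi_\delta+\tfrac{L_\delta}{2}\nrm{\cdot}^2$ is convex. Equivalently, one notes that $e_\gamma\Phi_\delta$ is the infimal convolution of $\Phi_\delta$ with a quadratic and that the minimizer is unique. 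Danskin's theorem then yields differentiability with the stated gradient. If this becomes technical, I would simply cite the standard result for weakly convex functions.
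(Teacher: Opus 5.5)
Your proposal is correct and follows the paper's proof essentially step for step. Both arguments use the unique prox point $p$ with $\nabla e_\gamma\Phi_\delta(x)=(x-p)/\gamma$, taken as standard just as the paper does, and the fixed-point identity $p=\operatorname{prox}_{\gamma h}(x-\gamma\nabla f_\delta(p))$. Both then combine nonexpansiveness of $\operatorname{prox}_{\gamma h}$ with $L_\delta$-smoothness to get $\nrm{p-q}\leq\gamma L_\delta\nrm{x-p}$, and finish with the two triangle inequalities and $\nabla f_\delta(x)\in\partial_\delta f(x)$ via \cref{lem:smoothing-transfer}.

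One minor correction: rearranging to $(1-\gamma L_\delta)\nrm{x-p}\leq\nrm{x-q}$ is valid for any $\gamma>0$. So that step does not use $\gamma<1/L_\delta$; the condition is only needed for uniqueness of $p$ and the envelope's gradient formula.
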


\begin{proof}
The $L_\delta$-Lipschitz continuity of $\nabla f_\delta$ and convexity of $h$ imply that $\Phi_\delta$ is $L_\delta$-weakly convex. Thus, the minimization defining $e_\gamma\Phi_\delta(x)$ is strongly convex and has a unique minimizer $z$, with
\begin{align}
  \nabla e_\gamma\Phi_\delta(x)=\frac{x-z}{\gamma}.
\end{align}
The optimality condition for $z$ is
\begin{align}
  0\in\nabla f_\delta(z)+\partial h(z)+\frac{z-x}{\gamma}
  =\partial h(z)+\frac{z-(x-\gamma\nabla f_\delta(z))}{\gamma}.
\end{align}
Hence, by \cref{eq:prox-definition},
\begin{align}
  z=\operatorname{prox}_{\gamma h}(x-\gamma\nabla f_\delta(z)).
\end{align}
Using \cref{eq:prox-gradient-map,eq:pgmap-vector-lipschitz} and the $L_\delta$-Lipschitz continuity of $\nabla f_\delta$, we obtain
\begin{align}
  \nrm{G_{\gamma h}(x,\nabla f_\delta(x))-\nabla e_\gamma\Phi_\delta(x)}
  &=\nrm{G_{\gamma h}(x,\nabla f_\delta(x))-G_{\gamma h}(x,\nabla f_\delta(z))}\\
  &\leq\nrm{\nabla f_\delta(x)-\nabla f_\delta(z)}\\
  &\leq L_\delta\nrm{x-z}
  =\gamma L_\delta\nrm{\nabla e_\gamma\Phi_\delta(x)}.
\end{align}
Thus, \cref{eq:moreau-prox-comparison} follows from the triangle inequalities, and \cref{eq:moreau-pgsp-transfer} follows from $\nabla f_\delta(x)\in\partial_\delta f(x)$ by \cref{lem:smoothing-transfer}.
\end{proof}

For exact sampled function values, \citet[Theorem~4.1]{pougkakiotis2026inexact} attain $\E\brk*{\nrm{\nabla e_\gamma\Phi_\delta(X)}^2}\leq\epsilon^2$ for their output $X$ using $O(d^{3/2}\delta^{-1}\epsilon^{-4})$ function-value queries, with $L_\delta=cL_0\sqrt d/\delta$ and $\gamma=(2L_\delta)^{-1}$, where $c>0$ is a universal constant.
Since $\gamma L_\delta=1/2$, \cref{eq:moreau-pgsp-transfer} and Jensen's inequality give
\begin{align}
  \E\brk*{\min_{g\in\partial_\delta f(X)}\nrm{G_{\gamma h}(X,g)}}
  &\leq\frac32\E\brk*{\nrm{\nabla e_\gamma\Phi_\delta(X)}}\\
  &\leq\frac32\sqrt{\E\brk*{\nrm{\nabla e_\gamma\Phi_\delta(X)}^2}}
  \leq\frac32\epsilon.
\end{align}
Thus, the same number of function-value queries yields a $(\gamma,\delta,3\epsilon/2)$-PGSP.

\endgroup

%% file: section/appendix_experimental_details.tex
\section{Experimental details}\label{app:experimental-details}

This section provides the experimental details of \Cref{sec:experiments}.
\paragraph{Problem setting.}

For the experiments, we use absolute loss for ReLU regression and ramp loss with $n=512$ and $d\in\set{16,256,1024}$.
The losses and common regularizer are
\begin{align}
  f_{\mathrm{ReLU}}(W)
  &=\frac1n\sum_{i=1}^n\abs*{\sum_{j=1}^{\sqrt d} v_j\max\set{0,\inpr{W_j,a_i}}-b_i},
  \label{eq:experiment-relu}\\
  f_{\mathrm{Ramp}}(x)
  &=\frac1n\sum_{i=1}^n\min\set{1,\max\set{0,1-b_i\inpr{a_i,x}}},
  \label{eq:experiment-ramp}\\
  h(x)&=\frac{0.02}{\sqrt d}\nrm{x}_1+\frac{0.01}{2}\nrm{x}^2
        +\ind_{[-0.5,0.5]^d}(x).
  \label{eq:experiment-regularizer}
\end{align}
Both problems have objective $\Phi=f+h$.
Both sample losses are $1$-Lipschitz.
The indicator function $\ind_C$ takes the value $0$ on $C$ and $+\infty$ outside $C$.
For absolute loss for ReLU regression, $x=\operatorname{vec}(W)$ with $W\in\R^{\sqrt d\times\sqrt d}$ and fixed output weights $v_j=(-1)^{j-1}/d^{1/4}$, without biases.

For data generation, $a_i$ are normalized Gaussian vectors with norm at most $1$. We draw the entries of $\bar W$ and $\bar w$ uniformly from $[-0.8,0.8]$, then zero each entry of $\bar W$ independently with probability $1/2$ and a randomly chosen half of the entries of $\bar w$. For absolute loss for ReLU regression, we set $b_i=\sum_{j=1}^{\sqrt d}v_j\max\set{0,\inpr{\bar W_j,a_i}}+\xi_i$, where $\xi_i$ are independent and uniform on $[-0.02,0.02]$. For ramp loss, we set $b_i=\operatorname{sign}(\inpr{a_i,\bar w})$ and independently flip each sign with probability $0.1$. Each dataset is fixed during optimization. Initial points are sampled by normalizing Gaussian vectors to norm $0.5$.
We evaluate \CompositeONC{} at its current anchor after each block and Chen's methods at their latest completed iterates.
We use a numerical upper bound on the stationarity measure $\min_{g\in\partial_\delta f(x)}\nrm{G_{\gamma h}(x,g)}$ with $\gamma=1$ and the composite objective value as evaluation measures. Evaluation queries are excluded from the query counts.

\begin{figure}[!t]
  \centering
  \includegraphics[width=0.4\textwidth]{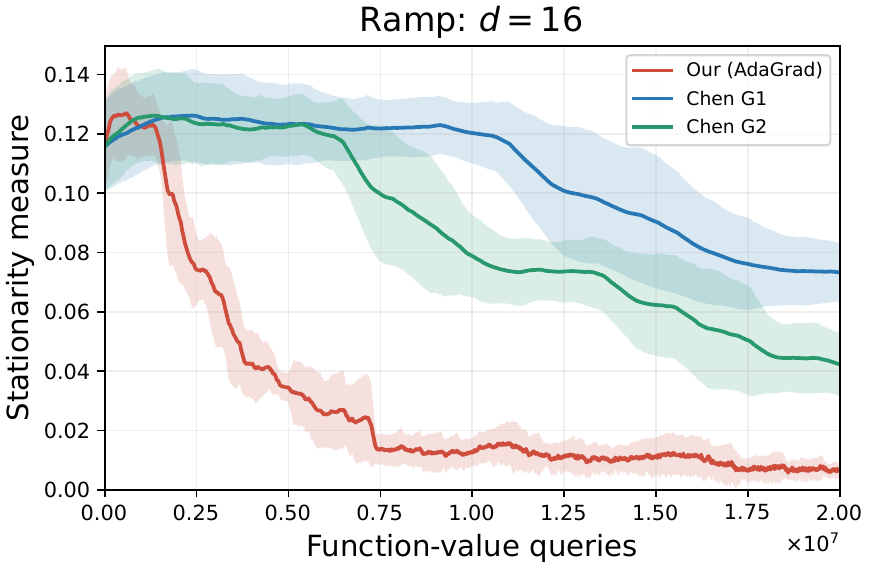}
  \includegraphics[width=0.4\textwidth]{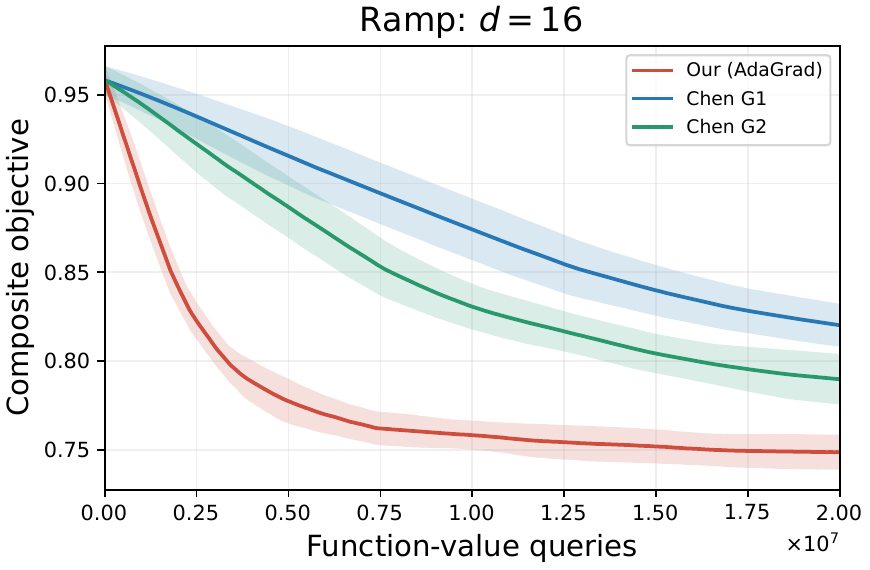}
  \includegraphics[width=0.4\textwidth]{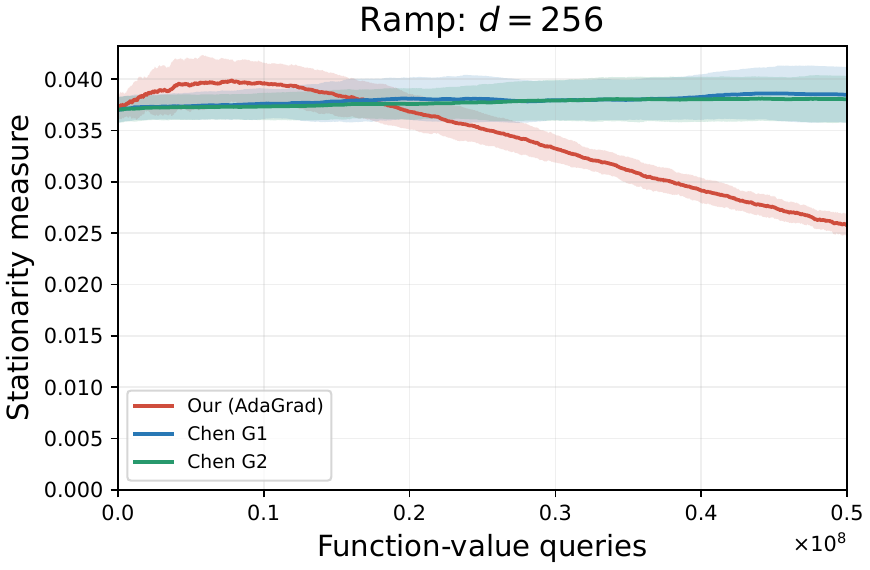}
  \includegraphics[width=0.4\textwidth]{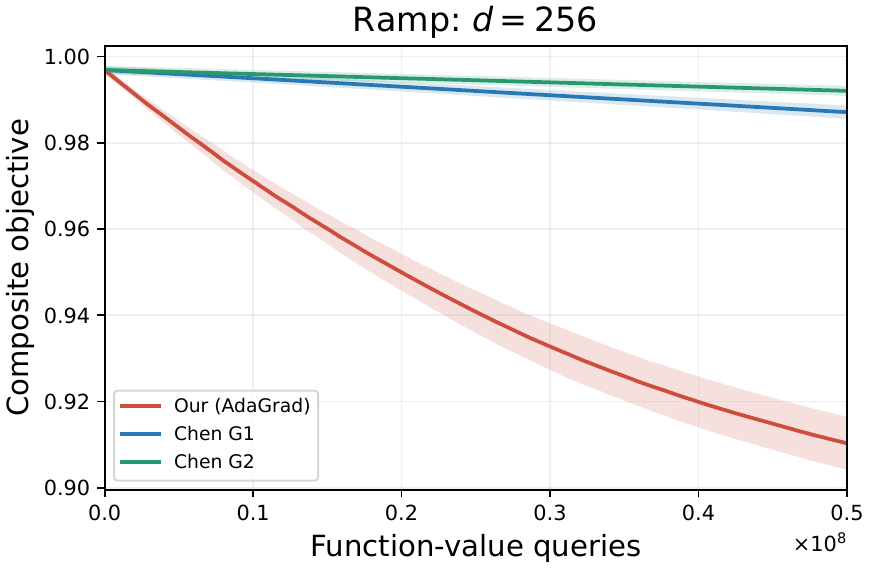}
  \includegraphics[width=0.4\textwidth]{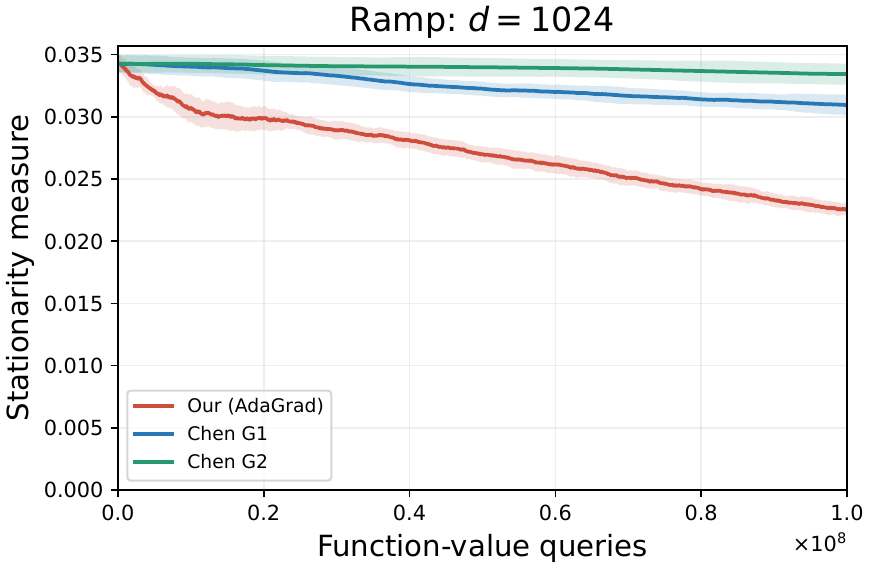}
  \includegraphics[width=0.4\textwidth]{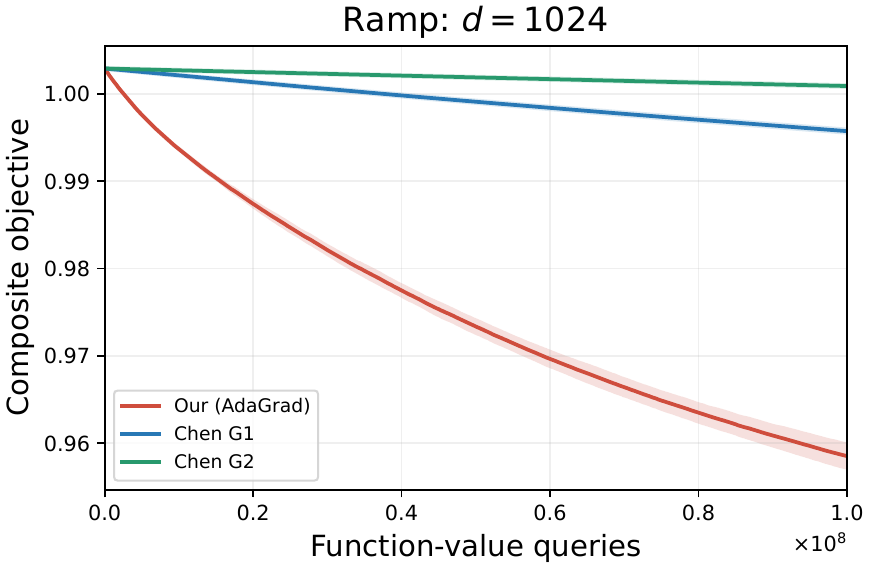}
  \caption{Stationarity measure $\min_{g\in\partial_\delta f(x)}\nrm{G_{\gamma h}(x,g)}$ (left) and composite objective values (right) for ramp loss
  at $d=16$ (top), $d=256$ (middle), and $d=1024$ (bottom).
  Curves and shaded bands show means and one standard deviation over ten trials.}
  \label{fig:experiment-ramp}
\end{figure}

\paragraph{Algorithm parameters.}
We fix $\delta=0.01$ and set the block length of \CompositeONC{} to $T=2500$. For Chen's methods, the batch sizes are $B=2500$ for G1 and $B_0=2500$, $B_1=50$ for G2, with refresh period $p=50$.
{For \CompositeONC{}, we set $D=\rho=\delta/3$. We use an adaptive step size like AdaGrad~\citep{duchi2011adagrad} based on the COMID regret bound in~\cref{eq:comid-regret-application},}
\begin{align}
  \eta_{k,t}=\frac{\sqrt{2}D}{\sqrt{1+\sum_{s=1}^t\nrm{g_{k,s}}^2}}.
\end{align}
{This step size gives $R_T=O(DG\sqrt T)$, retaining the order of the function-value query complexity in~\cref{thm:zeroth-order-complexity}.}
{For Chen's methods, we set $G=c=1$ and use the theoretical step sizes $\gamma_{\mathrm{G1}}=\frac{\delta}{\sqrt d}$ and $\gamma_{\mathrm{G2}}=\frac{\delta}{2(d+\sqrt d)}$ from \citet[Theorems~2 and 3]{chen2025}.}
The experiments were conducted using Python 3.12 on an Apple M1 Pro with 8 CPU cores and 16 GB of memory.
{For ramp loss, \cref{fig:experiment-ramp} likewise shows lower mean composite objective values for \CompositeONC{} over the course of optimization at all three dimensions. Its mean stationarity measure also becomes lower than those of both Chen methods as optimization proceeds.}

\paragraph{Computational cost and limitations.}
In our algorithm, the online learner solves convex subproblems at each round. The update rules are given by \cref{eq:comid-loss-update,eq:optimistic-comid} with fixed learning rates or by their variants with adaptive learning rates. Compared with minibatch methods, our algorithm incurs computational overhead from solving convex subproblems more frequently. Reducing computational costs and tuning learning rates for practical use are interesting directions. Since our main focus is improving oracle complexity, we leave them for future work.